\DeclareMathOperator*{\argmax}{arg\,max}
\newcommand{\R}{\mathbb{R}}
\newcommand{\E}{\mathbb{E}}
\begin{document}
\title{Maximum Principle Based Algorithms for Deep Learning}

\author{\name Qianxiao Li \email liqix@ihpc.a-star.edu.sg \\
	\addr Institute of High Performance Computing\\
	Agency for Science, Technology and Research\\
	1 Fusionopolis Way, Connexis North, Singapore 138632
	\AND
	\name Long Chen \email xidonglc@pku.edu.cn \\
	\addr Peking University\\
	Beijing, China, 100080
	\AND	
	\name Cheng Tai \email chengtai@pku.edu.cn \\
	\addr Beijing Institute of Big Data Research\\
	and Peking University\\
	Beijing, China, 100080
	\AND
	\name Weinan E \email weinan@math.princeton.edu \\
	\addr 
	Princeton University\\
	Princeton, NJ 08544, USA,\\
    Beijing Institute of Big Data Research\\
	and Peking University\\
	Beijing, China, 100080\\
}

\editor{Yoshua Bengio}

\maketitle

\begin{abstract}%
The continuous dynamical system approach to deep learning is explored in order to devise alternative
frameworks for training algorithms. Training is recast as a control problem and this allows us to formulate
necessary optimality conditions in continuous time using the Pontryagin's maximum principle (PMP).
A modification of the method of successive approximations is then used to solve the PMP, giving rise to an alternative training algorithm for deep learning. This approach has the advantage that rigorous error estimates and convergence results can be established. We also show that it may avoid some pitfalls of gradient-based methods, such as slow convergence on flat landscapes near saddle points. 
Furthermore, we demonstrate that it obtains favorable initial convergence rate per-iteration, provided Hamiltonian maximization can be efficiently carried out - a step which is still in need of improvement. Overall, the approach opens up new avenues to attack problems associated with deep learning, such as trapping in slow manifolds and inapplicability of gradient-based methods for discrete trainable variables. 
\end{abstract}

\begin{keywords}
	deep learning, optimal control, Pontryagin's maximum principle, method of successive approximations
\end{keywords}

\section{Introduction}
\label{sec:intro}
Supervised learning using deep neural networks has become an increasingly successful tool in modern machine learning applications~\citep{bengio2009learning,schmidhuber2015deep,lecun2015deep,goodfellow2016deep}. Efficient training methods of very deep neural networks, however, remain an active area of research. The most commonly applied training method is stochastic gradient descent~\citep{robbins1951stochastic,bottou2010large} and its variants~\citep{duchi2011adaptive,zeiler2012adadelta,kingma2014adam,johnson2013accelerating}, where incremental updates to the trainable parameters are performed using gradient information computed via back-propagation~\citep{kelley1960gradient,bryson1975applied}. While efficient to implement, the incremental updates to the parameter tend to be slow, especially in the initial stages of the training. Moreover, other than the computation of gradients through back-propagation, the specific structure of deep neural networks is not exploited. These observations point to the question of whether there exists alternative training methods tailored to deep neural networks. 

In a series of papers, we introduce an alternative approach by exploring the optimal control viewpoint of deep learning~\citep{e2017proposal}. Our focus will be on ideas and algorithms derived from the powerful Pontryagin's maximum principle~\citep{boltyanskii1960theory,pontryagin1987mathematical}, which has two major components: the Hamiltonian dynamics and the condition that at each time the optimal parameters maximize the Hamiltonian. The second component suggests that optimization can be performed independently at different layers. One can also derive an explicit error control estimate based on the maximum principle (see Lemma~\ref{lem:error_estimate} below).

In this first paper, we will consider the simplest context in which the deep neural networks are replaced by continuous (or discretized) dynamical systems, and devise numerical algorithms that are based on the optimality  conditions in the Pontryagin's maximum principle. This leads to a new approach for training deep learning models  that have certain advantages, such as fast initial descent and resilience to stalling in flat landscapes. An additional advantage is that one has a good control of the error through explicit estimates. 

The rest of the paper is organized as follows. In Section~\ref{sec:cts_time_formulation}, we present a dynamical systems viewpoint of function approximation and deep learning. We then discuss the necessary optimality conditions, which is the well-known Pontryagin's maximum principle. In Section~\ref{sec:msa} and~\ref{sec:discrete_time}, we discuss numerical methods to solve the necessary conditions and obtain error estimates and convergence guarantees. Using benchmarking examples, we then compare our method with traditional gradient-descent based methods for optimizing deep neural networks in Section~\ref{sec:numerical}. In Section~\ref{sec:discussion}, we discuss and compare our work with existing literature. Conclusion and outlook are given in Section~\ref{sec:conclusion}. 

\section{Function Approximation by Dynamical Systems}
\label{sec:cts_time_formulation}

We start with a description of the (continuous) dynamical systems approach to machine learning (see~\citealt{e2017proposal}). 
The essential task of supervised learning is to approximate some function
\[
F: \mathcal{X} \rightarrow \mathcal{Y}
\]
which maps inputs in $\mathcal{X}\subset\R^d$ (e.g.~images, time-series) to labels in $\mathcal{Y}$ (categories, numerical predictions). Given a collection of $K$ sample input-label pairs $\{x^i,y^i=F(x^i)\}_{i=1}^K$, one aims to approximate $F$ using these data points. In the dynamical systems framework, we consider the inputs $x=(x^1,\dots,x^K)\in\mathbb{R}^{d\times K}$ as the initial condition of a system of ordinary differential equations
\begin{equation}
\dot{X}^i_t = f(t,X^i_t,\theta_t), \quad X^i_0=x^i, \quad 0\leq t\leq T,
\label{eq:dyn_sys_res_nn}
\end{equation}
where $\theta: [0,T]\rightarrow\Theta\subset\mathbb{R}^p$, represents the control (training) parameters and $X_t=(X_t^1,\dots,X_t^K)\in\mathbb{R}^{d\times K}$ for all $t\in[0,T]$. 
The form of $f$ is chosen as part of the machine learning model. For example, in deep learning, $f$ is typically the composition (in either order) of a linear transformation and a component-wise nonlinear function (the activation function). 
For the solution to~\eqref{eq:dyn_sys_res_nn} to exist for any $\theta$, we shall assume hereafter that $f$ and $\nabla_x f$ are continuous in $t,x,\theta$. Note that weaker but more complicated conditions can be considered~\citep{clarke2005maximum}. 
For the $i^\text{th}$ input sample, the prediction of the ``network'' is a deterministic transformation of the terminal state $g(X^i_T)$ for some $g:\mathbb{R}^d\rightarrow \mathcal{Y}$, which we can view collectively as a function of the initial state (input) $x^i$ and the control parameters (weights) $\theta$. The dynamics~\eqref{eq:dyn_sys_res_nn} are decoupled across samples except for the dependence on the control $\theta$.
We shall consider quite a general space of controls
\[
\mathcal{U} := \{\theta:[0,T]\rightarrow\Theta:\theta \text{ is Lebesgue measurable}\}.
\]
The aim is to select $\theta$ from $\mathcal{U}$ so that $g(X^i_T)$ most closely resembles $y^i$ for $i=1,\dots,K$. To this end, we define a loss function $\Phi:\mathcal{Y}\times \mathcal{Y}\rightarrow \R$ which is minimized when its arguments are equal, and we consider minimizing $\sum_i\Phi(g(x^i_T),y^i)$. Since $g$ is fixed, we shall absorb it into the definition of the loss function by defining $\Phi_i(\cdot) := \Phi(\cdot,y_i)$. Then, the supervised learning problem in our framework is
\begin{align}
&\min_{\theta \in \mathcal{U}} \sum_{i=1}^{K}\Phi_i(X^i_T) + \int_{0}^{T} L(\theta_t)dt, \nonumber\\
&\dot{X}^i_t = f(t,X^i_t,\theta_t), \quad X^i_0=x^i, \quad 0\leq t\leq T, \quad i=1,\dots,K,
\label{eq:dyn_sys_opt_prob}
\end{align}
where $L:\Theta\rightarrow\R$ is a running cost, or the regularizer\footnote{We can also make $L$ depend on $X_t$, but for simplicity of presentation and the fact that most current machine learning models do not regularize the states, we shall omit this general case.}. We note here that alternatively, we can formulate the supervised learning problem more generally in terms of optimal control in function spaces, see Appendix~\ref{sec:Function space formulation}.  

Problem~\eqref{eq:dyn_sys_opt_prob} is a special case of a class of general optimal control problem for ordinary differential equations~\citep{bertsekas1995dynamic,athans2013optimal}. The advantage of this formulation is that we can write down and study the optimality conditions of~\eqref{eq:dyn_sys_opt_prob} entirely in continuous time and derive numerical algorithms that can subsequently be discretized. In other words, we {\it optimize, then discretize}, as opposed to the traditional reverse approach in deep learning. 

As was suggested in~\cite{e2017proposal}, deep residual networks~\citep{he2016deep} can be considered as the forward Euler discretization
of the continuous approach described above. In this connection,  the algorithms presented in this paper can also be
formulated in the context of deep residual networks.
For general deep neural networks, although one can also formulate similar algorithms,  it is not clear at this moment
that PMP holds and these algorithms are valid (e.g.~converge to the right solution)  in the general setting.
This issue will be studied in future work. 

The optimization problem~\eqref{eq:dyn_sys_opt_prob} can be solved by first discretizing it into a discrete problem (a feed-forward neural network) and then applying back propagation and gradient descent approaches commonly used in deep learning. However, here we will present an alternative approach. Hereafter, for simplicity of notation we shall set $K=1$ drop the scripts $i$ on all functions, noting that analogous results can be obtained in the general case since the dynamics and loss functions are decoupled across samples. Equivalently, we can think of this as effectively concatenating all $K$ sample inputs into a single input vector of dimension $d\times K$ and redefine our dynamics accordingly. Hence, all results remain valid if we perform full-batch training. The case of mini-batch training is discussed in Section~\ref{sec:remark_minibatch}. 

\subsection{Pontryagin's Maximum Principle}
\label{sec:pmp}
In this section, we introduce a set of necessary conditions for optimal solutions of~\eqref{eq:dyn_sys_opt_prob}, known as the Pontryagin's Maximum Principle (PMP)~\citep{boltyanskii1960theory,pontryagin1987mathematical}. This shall pave  way for an alternative numerical algorithm to train~\eqref{eq:dyn_sys_opt_prob} and its discrete-time counter-part. 

To begin with, we define the {\it Hamiltonian} $H:[0,T]\times \R^d\times \R^d\times \Theta \rightarrow \R$ given by
\[
H(t,x,p,\theta) := p \cdot f(t,x,\theta) - L(\theta).
\]
\begin{theorem}[Pontryagin's Maximum Principle]
\label{thm:pmp}
Let $\theta^*\in \mathcal{U}$ be an essentially bounded optimal control, i.e.~a solution to~\eqref{eq:dyn_sys_opt_prob} with $\text{ess\,sup}_{t\in[0,T]} \Vert \theta_t^*\Vert_\infty<\infty$ ($\text{ess\,sup}$ denotes the essential supremum). Denote by $X^*$ the corresponding optimally controlled state process. Then, there exists an absolutely continuous co-state process $P^*:[0,T]\rightarrow \R^d$ such that the Hamilton's equations
\begin{align}
\dot{X}^*_t &= \nabla_p H(t,X^*_t,P^*_t,\theta^*_t), & X^*_0 &= x, \label{eq:pmp_state_eqn}\\
\dot{P}^*_t &= -\nabla_x H(t,X^*_t,P^*_t,\theta^*_t), & P^*_T &= -\nabla\Phi(X^*_T), \label{eq:pmp_costate_eqn}
\end{align}
are satisfied. Moreover, for each $t\in[0,T]$, we have the Hamiltonian maximization condition
\begin{equation}
H(t,X^*_t, P^*_t, \theta^*_t) \geq H(t,X^*_t, P^*_t, \theta) \text{ for all } \theta\in\Theta.
\label{eq:pmp_hamiltonian_maximization}
\end{equation}
\end{theorem}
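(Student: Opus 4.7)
The plan is to prove the PMP via the classical needle (spike) variation argument, which exploits the fact that the admissible control class $\mathcal{U}$ is closed under localized perturbations in a way that Gâteaux/Fréchet variations on $\mathcal{U}$ are not. Fix an optimal $\theta^*$, a Lebesgue point $\tau\in(0,T)$ of $t\mapsto f(t,X^*_t,\theta^*_t)$ and $t\mapsto L(\theta^*_t)$, a fixed $\omega\in\Theta$, and a small $\epsilon>0$. Define the perturbed control $\theta^\epsilon_t=\omega$ on $[\tau-\epsilon,\tau]$ and $\theta^\epsilon_t=\theta^*_t$ otherwise; this clearly lies in $\mathcal{U}$. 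Let $X^\epsilon$ be the corresponding state, which exists on $[0,T]$ by the continuity and essential boundedness hypotheses and a standard Picard/Grönwall argument.

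Next, I would establish the first variation formula. On $[0,\tau-\epsilon]$ we have $X^\epsilon_t=X^*_t$. Using continuity of $f$, at $t=\tau$ one gets $X^\epsilon_\tau - X^*_\tau = \epsilon\bigl[f(\tau,X^*_\tau,\omega) - f(\tau,X^*_\tau,\theta^*_\tau)\bigr] + o(\epsilon)$, where the Lebesgue-point property justifies replacing integrals over $[\tau-\epsilon,\tau]$ by the pointwise value. On $[\tau,T]$ a Grönwall comparison gives $X^\epsilon_t - X^*_t = \epsilon\,\xi_t + o(\epsilon)$ uniformly in $t$, where $\xi$ solves the linear variational equation
\begin{align*}
\dot\xi_t &= \nabla_x f(t,X^*_t,\theta^*_t)\,\xi_t, \qquad t\in[\tau,T],\\
\xi_\tau &= f(\tau,X^*_\tau,\omega) - f(\tau,X^*_\tau,\theta^*_\tau).
\end{align*}
Here $\nabla_x f$ is continuous by hypothesis and $\theta^*$ is essentially bounded, so the linear ODE has a unique absolutely continuous solution.

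Now I would introduce $P^*$ as the solution of the adjoint (linear) ODE~\eqref{eq:pmp_costate_eqn} with terminal condition $P^*_T=-\nabla\Phi(X^*_T)$; absolute continuity is immediate from the Carathéodory form of the equation. The key computation is
\[
\frac{d}{dt}\bigl(P^*_t\cdot\xi_t\bigr) = \dot P^*_t\cdot\xi_t + P^*_t\cdot\dot\xi_t = 0,
\]
because $\dot P^*_t = -(\nabla_x f)^\top P^*_t$ (the running cost $L$ does not depend on $x$). Hence $P^*_T\cdot\xi_T = P^*_\tau\cdot\xi_\tau$. Using optimality of $\theta^*$,
\[
0 \le J(\theta^\epsilon)-J(\theta^*) = \nabla\Phi(X^*_T)\cdot(X^\epsilon_T - X^*_T) + \int_{\tau-\epsilon}^\tau\!\bigl[L(\omega)-L(\theta^*_t)\bigr]\,dt + o(\epsilon).
\]
Substituting the variation formula, $P^*_T=-\nabla\Phi(X^*_T)$, and the conservation identity yields
\[
0 \le -\epsilon\,P^*_\tau\cdot\bigl[f(\tau,X^*_\tau,\omega)-f(\tau,X^*_\tau,\theta^*_\tau)\bigr] + \epsilon\bigl[L(\omega)-L(\theta^*_\tau)\bigr] + o(\epsilon).
\]
Dividing by $\epsilon$, letting $\epsilon\downarrow0$, and rearranging in terms of $H$ gives $H(\tau,X^*_\tau,P^*_\tau,\theta^*_\tau)\ge H(\tau,X^*_\tau,P^*_\tau,\omega)$. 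Since Lebesgue points of the above integrands have full measure in $[0,T]$ and $\Theta$ admits a countable dense subset (or we can argue by separability of a suitable enlargement), a standard countable-intersection argument upgrades the inequality to hold for every $t\in[0,T]$ (modulo redefinition on a null set) and every $\omega\in\Theta$.

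The routine parts are the Grönwall/Picard estimates and the adjoint conservation identity. The main obstacle I expect is the needle-variation analysis at the endpoint $\tau$: justifying $X^\epsilon_\tau - X^*_\tau = \epsilon[f(\tau,X^*_\tau,\omega) - f(\tau,X^*_\tau,\theta^*_\tau)] + o(\epsilon)$ requires care because $\theta^*$ is only Lebesgue measurable, so one must work at Lebesgue points and use uniform continuity of $f$ in $x$ on compact sets together with the essential boundedness of $\theta^*$. A secondary technicality is passing from the pointwise inequality on a full-measure set to all $t$ and all $\omega$; if $\Theta$ is not separable one must invoke a measurable selection or restrict attention to a countable dense family and use continuity of $H$ in $\theta$ at the values attained by $\theta^*$.
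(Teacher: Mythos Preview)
The paper does not actually prove this theorem: it simply cites standard optimal control references (Athans--Falb, Bertsekas, Liberzon) and then discusses the abnormal multiplier technicality in the paragraph following the statement. Your proposal sketches precisely the classical needle (spike) variation proof that one finds in those references---in particular in Liberzon's book---so in that sense your approach is exactly what the paper is deferring to.

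A couple of minor remarks on your sketch. First, the argument you give establishes the normal case ($\lambda=1$) directly, which is legitimate here because the problem~\eqref{eq:dyn_sys_opt_prob} has a free terminal state and a smooth Bolza cost; the abnormal multiplier only genuinely enters when terminal or state constraints are present, and the paper explicitly assumes the normal case anyway. Second, your concern about separability of $\Theta$ is moot in this paper: $\Theta\subset\mathbb{R}^p$, so a countable dense subset always exists and the upgrade from Lebesgue points to all $t$ and all $\omega\in\Theta$ goes through without measurable selection. Finally, you correctly exploit that $L$ is independent of $x$, so $\nabla_x H = (\nabla_x f)^\top p$ and the adjoint equation is purely linear in $P^*$; this is what makes the conservation identity $\tfrac{d}{dt}(P^*_t\cdot\xi_t)=0$ clean.
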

	
The proof of the PMP and its variants can be found in any optimal control theory reference, e.g.~\cite{athans2013optimal,bertsekas1995dynamic,liberzon2012calculus}. Some generalizations can be found in~\cite{clarke2005maximum} and references therein. For example, the requirement of the continuity of $f$ with respect to $t$ can be replaced by a much weaker measurability requirement if one assumes more conditions on $\nabla_x f$. 
In the statement of Theorem~\ref{thm:pmp}, we omitted a technicality involving an abnormal multiplier: the terminal condition for $P^*$ should be $P^*_T = -\lambda \nabla\Phi(X^*_T)$ and the Hamiltonian should be defined as
$H(t,x,p,\theta)=p\cdot f(t,x,\theta) - \lambda L(\theta)$ for some $\lambda\geq 0$ (abnormal multiplier) that we can choose. When we are forced to always take $\lambda=0$, the problem is {\it singular} and in a sense ill-posed~\citep{athans2013optimal}. On the contrary, if we can take a positive $\lambda$, we can then rescale the equation for $P^*$ so that we can take $\lambda=1$ without loss of generality. We shall hereafter assume that this is the case. 

A few remarks are in order. First, Equation~\ref{eq:pmp_state_eqn},~\ref{eq:pmp_costate_eqn} and~\ref{eq:pmp_hamiltonian_maximization} allow us to solve for the unknowns $X^*,P^*,\theta^*$ simultaneously as a function of $t$. In this sense, the resulting optimal control $\theta^*$ is {\it open-loop} and is not in a feed-back form $\theta^*_t = \theta^*(X^*_t)$. The latter is of closed-loop type and are typically obtained from dynamic programming and the Hamilton-Jacobi-Bellman formalism~\citep{bellman2013dynamic}. In this sense, the PMP gives a weaker control. However, open-loop solutions are sufficient for neural network applications, where the trained weights and biases are fixed and only depend on the layer number and not the inputs. 

PMP can be regarded as a (highly non-trivial) generalization of the calculus of variations to non-smooth settings (since we only assume $\theta^*$ to be measurable). Perhaps more familiar to the optimization community, the PMP is related to the Karush-Kuhn-Tucker (KKT) conditions for non-linear constrained optimization. Indeed, we can view~\eqref{eq:dyn_sys_opt_prob} as a non-linear program over the function space $\mathcal{U}$ where the constraint is the ODE~\eqref{eq:dyn_sys_res_nn}. In this sense, the co-state process $P^*$ plays the role of a continuous-time analogue of Lagrange multipliers. The key difference between the PMP and the KKT conditions (besides the lack of inequality constraints on the state) is the Hamiltonian maximization condition~\eqref{eq:pmp_hamiltonian_maximization}, which is stronger than a typical first-order condition that assumes smoothness with respect to $\theta$ (e.g.~$\nabla_\theta H=0$). In particular, the PMP says that $H$ is not only stationary, but globally maximized at an optimal control - which is a much stronger statement if $H$ is not concave. Moreover, the PMP makes minimal assumptions on the parameter space $\Theta$; the PMP holds even when $f$ is non-smooth with respect to $\theta$, or worse, when $\Theta$ is a discrete subset of $\R^p$. 

Last, we emphasize that the PMP is only a necessary condition, hence there can be cases where solutions to the PMP is not actually globally optimal for~\eqref{eq:dyn_sys_opt_prob}. Nevertheless, in practice the PMP is often strong enough to give good solution candidates, and when certain convexity assumptions are satisfied the PMP becomes sufficient~\citep{bressan2007introduction}. In the next section, we will discuss numerical methods that can be used to solve the PMP.

\section{Method of Successive Approximations}
\label{sec:msa}
Now, our strategy is to devise numerical algorithms for training~\eqref{eq:dyn_sys_opt_prob} via solving the PMP (Equation~\ref{eq:pmp_state_eqn},~\ref{eq:pmp_costate_eqn} and~\ref{eq:pmp_hamiltonian_maximization}). We derive and analyze algorithms entirely in continuous time, which allows us to characterize errors estimates and convergence in a more transparent fashion. 

There are many methods for the numerical solution of the PMP, including two-point boundary value problem method~\citep{bryson1975applied,roberts1972two}, and collocation methods~\citep{betts1998survey} coupled with general non-linear programming techniques~\citep{bertsekas1999nonlinear,bazaraa2013nonlinear}. See~\citep{rao2009survey} for a more recent review. However, many of these methods concern small-scale problems typically encountered in control applications (e.g.~trajectory optimization of spacecrafts) and do not scale well to modern machine learning problems with a large number of state and control variables. One exception is the method of successive approximations (MSA)~\citep{chernousko1982method}, which is an iterative method based on alternating propagation and optimization steps. We first introduce the simplest form of the MSA. 

\subsection{Basic MSA}
\label{sec:basic_msa}
Observe that~\eqref{eq:pmp_state_eqn} is simply the equation
\[
\dot{X}^*_t = f(t,X^*_t,\theta^*_t),
\]
and is independent of the co-state $P^*$. Therefore, we may proceed in the following manner. First, we make an initial guess of the optimal control $\theta^0\in\mathcal{U}$. For each $k=0,1,2,\dots$, we first solve~\eqref{eq:pmp_state_eqn}
\begin{equation}
\dot{X}^{\theta^k}_t = f(t,X^{\theta^k}_t,\theta^k_t), \quad X^{\theta^k}_0 = x. 
\label{eq:Xtheta}
\end{equation}
for $X^{\theta^k}$, which then allows us to solve~\eqref{eq:pmp_costate_eqn}
\begin{equation}
\dot{P}^{\theta^k}_t = -\nabla_x H(t,X^{\theta^k}_t,P^{\theta^k}_t,\theta^k_t), \quad P^{\theta^k}_T = -\nabla \Phi(X^{\theta^k}_T),
\label{eq:Ptheta}
\end{equation}
to get $P^{\theta^k}$. Finally, we use the maximization condition~\eqref{eq:pmp_hamiltonian_maximization} to set
\[
\theta^{k+1}_t = \argmax_{\theta\in\Theta} H(t,X^{\theta^k}_t,P^{\theta^k}_t,\theta),
\]
for $t\in[0,T]$. The algorithm is summarized in Algorithm~\ref{alg:basic_msa}. 

\begin{algorithm}
	\SetAlgoLined
	Initialize: $\theta^0\in \mathcal{U}$ \;
	\For{$k = 0$ \KwTo \#Iterations}{
		Solve $\dot{X}^{\theta^k}_t = f(t,X^{\theta^k}_t,\theta^k_t), \quad X^{\theta^k}_0 = x$\;
		Solve $\dot{P}^{\theta^k}_t = -\nabla_x H(t,X^{\theta^k}_t,P^{\theta^k}_t,\theta^k_t), \quad P^{\theta^k}_T = -\nabla \Phi(X^{\theta^k}_T)$\;
		Set $\theta^{k+1}_t = \argmax_{\theta\in\Theta} H(t,X^{\theta^k}_t,P^{\theta^k}_t,\theta)$ for each $t\in[0,T]$\;
	}
	\caption{Basic MSA}
	\label{alg:basic_msa}
\end{algorithm} 

As is the case with  the maximum principle,  MSA consists of two major components:  the forward-backward Hamiltonian dynamics and the maximization for the optimal parameters at each time.
An important feature of MSA is that the Hamiltonian maximization step is decoupled for each $t\in[0,T]$. In the language of deep learning, the optimization step is decoupled for different network layers and only the Hamiltonian ODEs (Step 3,4 of Algorithm~\ref{alg:basic_msa}) involve propagation through the layers. 
This allows the parallelization of the maximization step, which is typically the most time-consuming step. 

It has been shown that the basic MSA converges for a restricted class of linear quadratic regulators~\citep{aleksandrov1968accumulation}.
However, in general it tends to diverge, especially if a bad initial $\theta^{0}$ is chosen~\citep{aleksandrov1968accumulation,chernousko1982method}.
Our goal now is to modify the basic MSA to control its divergent behavior. Before we do so, it is important to understand why the MSA diverges, and in particular, the relationship between the maximization step in Algorithm~\ref{alg:basic_msa} and the optimization problem~\eqref{eq:dyn_sys_opt_prob}. 

\subsection{Error Estimate for the Basic MSA}
\label{sec:msa_error_est}
For each $\theta\in\mathcal{U}$, let us denote 
\[
J(\theta) := \Phi(X^{\theta}_T) + \int_{0}^{T} L(\theta_t) dt,
\]
where $X^\theta$ satisfies~\eqref{eq:Xtheta}. Our goal is to minimize $J(\theta)$. We show in the following Lemma the relationship between the values of $J$ and the Hamiltonian maximization step. We start by making the following assumptions.
\begin{enumerate}
	\item[(A1)] $\Phi$ is twice continuously differentiable, with $\Phi$ and $\nabla \Phi$ satisfying a Lipschitz condition, i.e.~there exists $K>0$ such that
	\[
	\vert\Phi(x)-\Phi(x')\vert + \Vert\nabla \Phi(x)-\nabla \Phi(x')\Vert \leq K \Vert x-x' \Vert,
	\]
	for all $x,x'\in \R^d$. 
	\item[(A2)] $f(t,\cdot,\theta)$ is twice continuously differentiable in $x$, with  $f,\nabla_x f$ satisfying a Lipschitz condition in $x$ uniformly in $\theta$ and $t$, i.e.~there exists $K>0$ such that
	\[
	\Vert f(t,x,\theta)-f(t,x',\theta)\Vert + \Vert \nabla_x f(t,x,\theta)-\nabla_x f(t,x',\theta)\Vert_2 \leq K \Vert x-x' \Vert,
	\]
	for all $x,x'\in\R^d$ and $t\in[0,T]$. Note that $\Vert\cdot\Vert_2$ denotes the induced 2-norm. 
\end{enumerate}
With these assumptions, we have the following estimate:
\begin{lemma}
	Suppose (A1)-(A2) holds. 
	Then, there exists a constant $C>0$ such that for any $\theta,\phi\in\mathcal{U}$, 
	\begin{align*}
	J(\phi) \leq& J(\theta) - \int_{0}^{T} \Delta_{\phi,\theta} H(t) dt \\ 
	&+ C \int_{0}^{T} \Vert f(t,X^\theta_t,\phi_t) - f(t,X^\theta_t,\theta_t) \Vert^2 dt \\
	&+ C \int_{0}^{T} \Vert \nabla_x H(t,X^\theta_t, P^\theta_t, \phi_t) - \nabla_x H(t,X^\theta_t, P^\theta_t, \theta_t) \Vert^2 dt,
	\end{align*}
	where $X^\theta$, $P^\theta$ satisfy Equations~\ref{eq:Xtheta},~\ref{eq:Ptheta} respectively and $\Delta H_{\phi,\theta}$ denotes the change in Hamiltonian
	\[
	\Delta H_{\phi,\theta}(t) := H(t,X^\theta_t,P^\theta_t, \phi_t) - H(t,X^\theta_t, P^\theta_t, \theta_t).	
	\]
	\label{lem:error_estimate}
\end{lemma}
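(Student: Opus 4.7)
The plan is to compute $J(\phi)-J(\theta)$ and use the co-state $P^{\theta}$ as a Lagrange-like multiplier that converts the terminal difference $\Phi(X^\phi_T)-\Phi(X^\theta_T)$ into an integral over $[0,T]$ which, together with the running cost, rearranges into $-\int_0^T \Delta H_{\phi,\theta}(t)\,dt$ plus quadratic remainder terms in $\delta X_t := X^\phi_t - X^\theta_t$. Concretely, first I would apply Taylor's theorem to $\Phi$ around $X^\theta_T$. Since $\nabla\Phi$ is Lipschitz by (A1), this gives
\[
\Phi(X^\phi_T)-\Phi(X^\theta_T) = \nabla\Phi(X^\theta_T)\cdot\delta X_T + R_\Phi, \qquad |R_\Phi|\le C\|\delta X_T\|^2,
\]
and using the terminal condition $P^\theta_T=-\nabla\Phi(X^\theta_T)$, the linear term becomes $-P^\theta_T\cdot\delta X_T$.

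Next I would use $\delta X_0=0$ and the fundamental theorem of calculus to expand
\[
-P^\theta_T\cdot\delta X_T = -\int_0^T \bigl(\dot{P}^\theta_t\cdot\delta X_t + P^\theta_t\cdot\dot{\delta X}_t\bigr)\,dt.
\]
Substituting $\dot P^\theta_t=-\nabla_x H(t,X^\theta_t,P^\theta_t,\theta_t)$ and $\dot{\delta X}_t=f(t,X^\phi_t,\phi_t)-f(t,X^\theta_t,\theta_t)$, and then adding and subtracting $f(t,X^\theta_t,\phi_t)$ and using the first-order Taylor expansion $f(t,X^\phi_t,\phi_t)=f(t,X^\theta_t,\phi_t)+\nabla_x f(t,X^\theta_t,\phi_t)\delta X_t+R_1(t)$ with $\|R_1(t)\|\le C\|\delta X_t\|^2$ (valid by (A2) since $\nabla_x f$ is Lipschitz in $x$), the integrand reorganizes using the identities $p\cdot\nabla_x f = \nabla_x H$ and $p\cdot(f(\cdot,\phi)-f(\cdot,\theta)) = \bigl(H(\cdot,\phi)-H(\cdot,\theta)\bigr)+L(\phi)-L(\theta)$ into exactly
\[
-\Delta H_{\phi,\theta}(t) - \bigl(L(\phi_t)-L(\theta_t)\bigr) - \bigl[\nabla_x H(t,X^\theta_t,P^\theta_t,\phi_t)-\nabla_x H(t,X^\theta_t,P^\theta_t,\theta_t)\bigr]\cdot\delta X_t - P^\theta_t\cdot R_1(t).
\]
Adding back $\int_0^T(L(\phi_t)-L(\theta_t))\,dt$ from the definition of $J$ cancels the $L$-terms, producing the desired $-\int_0^T \Delta H_{\phi,\theta}(t)\,dt$ as the main linear term, with two categories of leftover error: a cross term $\int_0^T [\nabla_x H(\cdot,\phi)-\nabla_x H(\cdot,\theta)]\cdot\delta X_t\,dt$ and the quadratic remainders $R_\Phi$ and $\int_0^T P^\theta_t\cdot R_1(t)\,dt$.

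Finally I would close the estimate by bounding $\|\delta X_t\|$. Writing $\alpha_t := \|f(t,X^\theta_t,\phi_t)-f(t,X^\theta_t,\theta_t)\|$, the usual decomposition $\dot{\delta X}_t=[f(t,X^\phi_t,\phi_t)-f(t,X^\theta_t,\phi_t)]+[f(t,X^\theta_t,\phi_t)-f(t,X^\theta_t,\theta_t)]$ combined with the Lipschitz bound from (A2) gives $\|\delta X_t\|\le K\int_0^t\|\delta X_s\|\,ds+\int_0^t\alpha_s\,ds$, and Gr\"onwall yields $\sup_{t\in[0,T]}\|\delta X_t\|^2\le C\int_0^T\alpha_s^2\,ds$. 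A parallel Gr\"onwall estimate on the co-state equation (using boundedness of $X^\theta$, $\nabla_x f$, and the Lipschitz bound on $\nabla\Phi$) shows $\sup_t\|P^\theta_t\|\le C$. Applying Young's inequality to the cross term as $\beta_t\|\delta X_t\|\le \tfrac12\beta_t^2+\tfrac12\|\delta X_t\|^2$ (with $\beta_t$ the $\nabla_x H$ difference norm) and using the Gr\"onwall bound on the quadratic remainders $\|\delta X_T\|^2$ and $\int_0^T\|\delta X_t\|^2\,dt$, everything collapses into $C\int_0^T\alpha_t^2\,dt+C\int_0^T\beta_t^2\,dt$, which is the claimed bound.

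\textbf{Main obstacle.} The routine part is the Gr\"onwall/Young bookkeeping at the end; the delicate part is the algebraic identity in the middle paragraph, where one must keep careful track of which argument ($\phi$ vs.\ $\theta$) the Hamiltonian and its gradient are evaluated at, so that the $L$-contributions cancel exactly and the mismatch $\nabla_x H(\cdot,\phi)-\nabla_x H(\cdot,\theta)$ emerges naturally (rather than simply $\nabla_x H(\cdot,\theta)$, which would not be quadratically small when $\phi\neq\theta$). Recognizing why this particular mismatch is the right quantity — and why it is exactly what controls the deviation from the idealized ``update by Hamiltonian maximization'' picture — is the conceptual crux of the lemma.
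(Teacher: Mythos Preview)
Your argument is correct and actually somewhat more economical than the paper's. The paper proceeds via the variational identity $I(X^\theta,P^\theta,\theta):=\int_0^T \bigl(P^\theta_t\cdot\dot X^\theta_t - H - L\bigr)\,dt\equiv 0$, subtracts $I(X^\phi,P^\phi,\phi)-I(X^\theta,P^\theta,\theta)=0$, and then carries out integration by parts and Taylor expansions in the joint variable $Z=(X,P)$; consequently it must also introduce $\delta P_t = P^\phi_t-P^\theta_t$ and prove a separate Gr\"onwall estimate for $\Vert\delta P_t\Vert$ (in addition to the uniform bound on $\Vert P^\theta_t\Vert$, which you also use). By contrast, you keep $P^\theta$ as the sole multiplier throughout, expand only in $x$, and never invoke $P^\phi$ or $\delta P$. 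The price you pay is that the cross term $[\nabla_x H(\cdot,\phi)-\nabla_x H(\cdot,\theta)]\cdot\delta X_t$ must be dispatched directly by Young's inequality, whereas in the paper the analogous term appears together with a contribution from $\delta P$ and is absorbed into the quadratic $\Vert\delta Z\Vert^2$ estimate; but either route works and yours is shorter. Your identification of the ``delicate part'' is exactly right: the key algebraic point, shared by both proofs, is that after inserting the co-state dynamics the residual linear-in-$\delta X$ term is governed by $\nabla_x H(\cdot,\phi)-\nabla_x H(\cdot,\theta)$ rather than $\nabla_x H(\cdot,\theta)$ alone, which is precisely why the second feasibility error appears in the final bound.
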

\begin{proof}
See Appendix~\ref{sec:proof_lemma_1} for the proof and discussion on relaxing the assumptions. 
\end{proof}

In essence, Lemma~\ref{lem:error_estimate} says that the Hamiltonian maximization step in MSA (step 5 in Algorithm~\ref{alg:basic_msa}) is in some sense the optimal descent direction for $J$. However, the last two terms on the right hand side indicates that this descent can be nullified if substituting $\phi$ for $\theta$ incurs too much error in the Hamiltonian dynamics (step 3,4 in Algorithm~\ref{alg:basic_msa}). In other words, the last two integrals measure the degree of satisfaction of the Hamiltonian dynamics~\eqref{eq:pmp_state_eqn},~\eqref{eq:pmp_costate_eqn}, which can be viewed as a feasibility condition, when one replaces $\theta$ by $\phi$. Hence, we shall hereafter refer to these errors as {\it feasibility errors}. 
The divergence of the basic MSA happens when the feasibility errors blow up. Armed with this understanding, we can then modify the basic MSA to ensure convergence. 

\subsection{Extended PMP and Extended MSA}
\label{sec:emsa}
As discussed previously in Lemma~\ref{lem:error_estimate}, the decrement of $J$ is ensured if we can control the feasibility errors in the Hamiltonian dynamics in steps 3,4 of Algorithm~\ref{alg:basic_msa}. To this end, we employ a similar idea to augmented Lagrangians~\citep{hestenes1969multiplier}. Fix some $\rho>0$ and introduce the augmented Hamiltonian
\begin{align}
\tilde{H}(t,x,p,\theta,v,q) :=& H(t,x,p,\theta) - \frac{1}{2} \rho \Vert v-f(t,x,\theta) \Vert^2 \nonumber\\
& - \frac{1}{2} \rho \Vert q+\nabla_x H(t,x,p,\theta) \Vert^2. 
\label{eq:aug_H}
\end{align}
Then, we have the following set of alternative necessary conditions for optimality:
\begin{proposition}[Extended PMP]
	\label{prop:epmp}
	Suppose that $\theta^*$ is an essentially bounded solution to the optimal control problem~\eqref{eq:dyn_sys_opt_prob}. Then, there exists an absolutely continuous co-state process $P^*$ such that the tuple $(X_t^*, P_t^*, \theta_t^*)$ satisfies the necessary conditions
	\begin{align}
	&\dot{X}_{t}^{*} =  \nabla_{p}\tilde{H}(t,X_{t}^{*},P_{t}^{*},\theta_{t}^{*},\dot{X}_t^*,\dot{P}_t^*),&
	X_{0}^{*}&=x, \label{eq:ext_pmp_state_eqn}\\
	&\dot{P}_{t}^{*} = -\nabla_{x}\tilde{H}(t,X_{t}^{*},P_{t}^{*},\theta_{t}^{*},\dot{X}_t^*,\dot{P}_t^*),&
	P_{T}^{*}&=-\nabla_{x}\Phi(X_{T}^{*}), \label{eq:ext_pmp_costate_eqn}\\
	&\tilde{H}(t,X_{t}^{*},P_{t}^{*},\theta^*_t,\dot{X}_t^*,\dot{P}_t^*) \geq \tilde{H}(t,X_{t}^{*},P_{t}^{*},\theta,\dot{X}_t^*,\dot{P}_t^*),
	& \theta&\in\Theta,t\in [0,T].
	\label{eq:ext_pmp_hamiltonian_maximization}
	\end{align}
\end{proposition}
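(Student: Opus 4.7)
The plan is to derive the extended PMP as an almost immediate consequence of the classical PMP (Theorem~\ref{thm:pmp}). The augmented Hamiltonian $\tilde{H}$ is designed so that the two quadratic penalty terms $\tfrac{1}{2}\rho\|v-f(t,x,\theta)\|^{2}$ and $\tfrac{1}{2}\rho\|q+\nabla_x H(t,x,p,\theta)\|^{2}$ are nonnegative and vanish precisely when the classical Hamilton equations are satisfied. Along the optimal trajectory these penalties, together with their $x$- and $p$-gradients, will all vanish, so the extended conditions collapse to the classical ones.

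First I would apply Theorem~\ref{thm:pmp} to the given essentially bounded $\theta^{*}$ to obtain an absolutely continuous co-state $P^{*}$ with $\dot X^{*}_{t}=f(t,X^{*}_{t},\theta^{*}_{t})$, $\dot P^{*}_{t}=-\nabla_x H(t,X^{*}_{t},P^{*}_{t},\theta^{*}_{t})$, $P^{*}_{T}=-\nabla\Phi(X^{*}_{T})$, and $H(t,X^{*}_{t},P^{*}_{t},\theta^{*}_{t})\ge H(t,X^{*}_{t},P^{*}_{t},\theta)$ for a.e.~$t$. Setting $v=\dot X^{*}_{t}$ and $q=\dot P^{*}_{t}$ then gives $v-f(t,X^{*}_{t},\theta^{*}_{t})=0$ and $q+\nabla_x H(t,X^{*}_{t},P^{*}_{t},\theta^{*}_{t})=0$, so $\tilde{H}(t,X^{*}_{t},P^{*}_{t},\theta^{*}_{t},\dot X^{*}_{t},\dot P^{*}_{t})=H(t,X^{*}_{t},P^{*}_{t},\theta^{*}_{t})$.

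Next I would verify the implicit state and costate equations by direct differentiation. Computing $\nabla_p\tilde H$, the first penalty contributes nothing and the second contributes a term whose factor is $(q+\nabla_x H)$, which vanishes at the optimum, leaving $\nabla_p\tilde H = \nabla_p H = f(t,X^{*}_{t},\theta^{*}_{t}) = \dot X^{*}_{t}$. Likewise $\nabla_x\tilde H$ picks up correction terms proportional to $(v-f)$ and to $(q+\nabla_x H)$; both factors are zero along the optimal trajectory, so $-\nabla_x\tilde H = -\nabla_x H = \dot P^{*}_{t}$. The terminal condition on $P^{*}$ is inherited directly from Theorem~\ref{thm:pmp}. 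For the maximization~\eqref{eq:ext_pmp_hamiltonian_maximization}, note that for any $\theta\in\Theta$ the two penalty terms are nonnegative, so $\tilde H(t,X^{*}_{t},P^{*}_{t},\theta,\dot X^{*}_{t},\dot P^{*}_{t})\le H(t,X^{*}_{t},P^{*}_{t},\theta)$; combining with the classical inequality $H(t,X^{*}_{t},P^{*}_{t},\theta^{*}_{t})\ge H(t,X^{*}_{t},P^{*}_{t},\theta)$ and the identity $\tilde H(\theta^{*}_{t})=H(\theta^{*}_{t})$ established above yields the desired inequality.

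The main (and essentially only) subtlety is that \eqref{eq:ext_pmp_state_eqn}--\eqref{eq:ext_pmp_costate_eqn} are implicit equations: $\dot X^{*}_{t}$ and $\dot P^{*}_{t}$ appear on both sides through $\tilde H$. This is not a real obstacle because the implicit terms enter the right-hand sides only through factors that vanish at the optimal trajectory, so the implicit equations reduce consistently to the classical explicit Hamilton equations. No regularity beyond what is already guaranteed by Theorem~\ref{thm:pmp} is needed.
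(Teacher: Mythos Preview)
Your proposal is correct and follows essentially the same route as the paper: invoke the classical PMP, observe that along the optimal trajectory the penalty terms (and hence their $x$- and $p$-gradients, which carry the factors $v-f$ and $q+\nabla_x H$) vanish so that the extended Hamilton equations reduce to the classical ones, and deduce the maximization condition from the fact that $\theta^*_t$ simultaneously maximizes $H$ and annihilates both nonnegative penalties. Your chain $\tilde H(\theta)\le H(\theta)\le H(\theta^*_t)=\tilde H(\theta^*_t)$ is exactly the paper's ``$\theta^*$ maximizes all three terms on the RHS simultaneously'' unpacked.
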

\begin{proof}
	If $\theta^*$ is optimal, then by the PMP there exists a co-state process $P^*$ such that~\eqref{eq:pmp_state_eqn},~\eqref{eq:pmp_costate_eqn} and~\eqref{eq:pmp_hamiltonian_maximization} are satisfied. Then, for all $t\in[0,T]$ and $\theta\in\Theta$ we have 
	\begin{align*}
	\nabla_x \tilde{H}(t,X_{t}^{*},P_{t}^{*},\theta,\dot{X}_t^*,\dot{P}_t^*) =& \nabla_x H(t,X_{t}^{*},P_{t}^{*},\theta), \\
	\nabla_p \tilde{H}(t,X_{t}^{*},P_{t}^{*},\theta,\dot{X}_t^*,\dot{P}_t^*) =& \nabla_p H(t,X_{t}^{*},P_{t}^{*},\theta),	
	\end{align*}
	which implies that~\eqref{eq:ext_pmp_state_eqn} and~\eqref{eq:ext_pmp_costate_eqn} are satisfied. Lastly, we can write
	\begin{align*}
	&\tilde{H}(t,X_{t}^{*},P_{t}^{*},\theta,\dot{X}_t^*,\dot{P}_t^*) \\
	= &H(t,X_{t}^{*},P_{t}^{*},\theta) 
	- \frac{1}{2}\rho \Vert \dot{X}_t^* - f(t,X_t^*,\theta) \Vert^2	
	- \frac{1}{2}\rho \Vert \dot{P}_t^* + \nabla_x H(t,X_t^*,P_t^*,\theta) \Vert^2.
	\end{align*}
	For each $t$, $\theta^*$ maximizes all three terms on the RHS simultaneously, and hence~\eqref{eq:ext_pmp_hamiltonian_maximization} is also satisfied. 
\end{proof}
Compared with the usual PMP, the extended PMP is a weaker necessary condition. However, the advantage is that maximization of $\tilde{H}$ naturally penalizes errors in the Hamiltonian dynamical equations, and hence we should expect MSA applied to the extended PMP to converge for large enough $\rho$. Note that the Hamiltonian equation steps do not change (since the added terms have no effect on optimal solutions) and the only change is the maximization step. The extended MSA (E-MSA) algorithm is summarized in Algorithm~\ref{alg:extended_msa}.

To establish convergence, define
\[
\mu_k:=\int_{0}^{T}\Delta H_{\theta^{k+1},\theta^k}(t)dt \geq 0.
\]
If $\mu_k=0$, then from the Hamiltonian maximization step~\eqref{eq:ext_pmp_hamiltonian_maximization} we must have 

\begin{align*}
0 = &-\mu_k \leq - \frac{1}{2}\rho \int_{0}^{T} \Vert f(t,X^{\theta^k}_t,\theta^{k+1}_t) - f(t,X^{\theta^k}_t,\theta^k_t) \Vert^2 dt \\
&- \frac{1}{2}\rho \int_{0}^{T} \Vert \nabla_x H(t,X^{\theta^k}_t, P^{\theta^k}_t, \theta^{k+1}_t) - \nabla_x H(t,X^{\theta^k}_t, P^{\theta^k}_t, \theta^k_t) \Vert^2 dt. \leq 0.
\end{align*}
and so 
\[
\max_\theta \tilde{H}(X^{\theta^k}_t,P^{\theta^k}_t, \theta, \dot{X}^{\theta^k}_t, \dot{P}^{\theta^k}_t)=\tilde{H}(X^{\theta^k}_t,P^{\theta^k}_t, \theta_k, \dot{X}^{\theta^k}_t, \dot{P}^{\theta^k}_t),
\]
i.e.~$(X^{\theta^k}, P^{\theta^k}, \theta^k)$ satisfy the extended PMP. In other words, the quantity $\mu_k\geq 0$ measures the distance from a solution of the extended PMP, and if it equals 0, then we have a solution. We now prove the following result that guarantees the convergence of the extended MSA (Algorithm~\ref{alg:extended_msa}).
\begin{theorem}
	Let (A1)-(A2) be satisfied and $\theta^0\in\mathcal{U}$ be any initial measurable control with $J(\theta^0)<+\infty$. Suppose also that
	$\inf_{\theta\in\mathcal{U}} J(\theta)>-\infty$. Then, for $\rho$ large enough, we have under Algorithm~\ref{alg:extended_msa},
	\[
	J(\theta^{k+1}) - J(\theta^k) \leq -D \mu_k.
	\]
	for some constant $D>0$ and
	\[
	\lim_{k\rightarrow0} \mu_k = 0,
	\]
	i.e.~the extended MSA algorithm converges to the set of solutions of the extended PMP.
	\label{thm:convergence_emsa}
\end{theorem}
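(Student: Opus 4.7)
The plan is to combine the error estimate of Lemma~\ref{lem:error_estimate} with the penalty structure of $\tilde H$ so that, once $\rho$ is large enough, the feasibility error terms that appear on the right-hand side of Lemma~\ref{lem:error_estimate} are dominated by the Hamiltonian improvement obtained from the maximization step. Concretely, I would introduce the two ``feasibility error'' integrals
\begin{align*}
A_k &:= \int_0^T \| f(t,X^{\theta^k}_t,\theta^{k+1}_t) - f(t,X^{\theta^k}_t,\theta^k_t) \|^2\, dt, \\
B_k &:= \int_0^T \| \nabla_x H(t,X^{\theta^k}_t,P^{\theta^k}_t,\theta^{k+1}_t) - \nabla_x H(t,X^{\theta^k}_t,P^{\theta^k}_t,\theta^k_t) \|^2\, dt,
\end{align*}
and rewrite Lemma~\ref{lem:error_estimate} as $J(\theta^{k+1}) - J(\theta^k) \leq -\mu_k + C(A_k + B_k)$.

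Next I would exploit the maximization step in Algorithm~\ref{alg:extended_msa}. Since $\dot X^{\theta^k}_t = f(t,X^{\theta^k}_t,\theta^k_t)$ and $\dot P^{\theta^k}_t = -\nabla_x H(t,X^{\theta^k}_t,P^{\theta^k}_t,\theta^k_t)$, the two penalty terms in $\tilde H$ vanish when evaluated at $\theta^k$, so $\tilde H(t,X^{\theta^k}_t,P^{\theta^k}_t,\theta^k_t,\dot X^{\theta^k}_t,\dot P^{\theta^k}_t) = H(t,X^{\theta^k}_t,P^{\theta^k}_t,\theta^k_t)$. The maximization property~\eqref{eq:ext_pmp_hamiltonian_maximization} then yields, pointwise in $t$,
\[
H(t,X^{\theta^k}_t,P^{\theta^k}_t,\theta^{k+1}_t) - H(t,X^{\theta^k}_t,P^{\theta^k}_t,\theta^k_t) \geq \tfrac{\rho}{2}\,\|f(t,X^{\theta^k}_t,\theta^{k+1}_t)-f(t,X^{\theta^k}_t,\theta^k_t)\|^2 + \tfrac{\rho}{2}\,\|\nabla_x H(\cdot,\theta^{k+1}_t) - \nabla_x H(\cdot,\theta^k_t)\|^2.
\]
Integrating over $[0,T]$ gives the key bound $\mu_k \geq \tfrac{\rho}{2}(A_k + B_k) \geq 0$, which simultaneously confirms $\mu_k \geq 0$ and controls the feasibility errors by the Hamiltonian gain.

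Substituting this bound back into the rewritten Lemma~\ref{lem:error_estimate} gives
\[
J(\theta^{k+1}) - J(\theta^k) \leq -\mu_k + C(A_k + B_k) \leq -\Bigl(1 - \tfrac{2C}{\rho}\Bigr)\mu_k,
\]
so picking any $\rho > 2C$ yields the desired descent inequality with $D := 1 - 2C/\rho > 0$. Finally, telescoping over $k = 0, 1, \dots, N$ and invoking the assumption $\inf_{\theta\in\mathcal U} J(\theta) > -\infty$ produces $D\sum_{k=0}^{N} \mu_k \leq J(\theta^0) - \inf J < \infty$, so the nonnegative series $\sum_k \mu_k$ converges and hence $\mu_k \to 0$.

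The only genuinely delicate point in the argument is ensuring that the constant $C$ coming out of Lemma~\ref{lem:error_estimate} is independent of the iterates $\theta^k$ (so that a single $\rho$ works uniformly in $k$). This requires that the trajectories $X^{\theta^k}$ and co-states $P^{\theta^k}$, together with $\nabla_x f$, $\nabla\Phi$, remain bounded along the iteration; assumptions (A1)--(A2) plus the Gronwall-type estimates used to prove Lemma~\ref{lem:error_estimate} supply this uniform bound, since the initial data $x$ and terminal data $-\nabla\Phi(X^{\theta^k}_T)$ are controlled via the Lipschitz hypotheses. Once that uniformity is in place, the rest of the proof is a short algebraic manipulation.
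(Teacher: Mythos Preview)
Your proposal is correct and follows essentially the same argument as the paper: apply Lemma~\ref{lem:error_estimate} with $\theta=\theta^k$, $\phi=\theta^{k+1}$, use the $\tilde H$-maximization step (together with the observation that the penalty terms vanish at $\theta^k$) to obtain $\mu_k\ge \tfrac{\rho}{2}(A_k+B_k)$, combine to get $J(\theta^{k+1})-J(\theta^k)\le -(1-2C/\rho)\mu_k$, choose $\rho>2C$, and telescope. Your remark on the uniformity of $C$ in $k$ is a useful clarification; in the paper this is implicit since the constant in Lemma~\ref{lem:error_estimate} depends only on the data of (A1)--(A2) and $T$, not on the particular controls $\theta,\phi$.
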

\begin{proof}
	Using Lemma~\ref{lem:error_estimate} with $\theta\equiv\theta^k,\phi\equiv\theta^{k+1}$, we have
	\begin{align}
	J(\theta^{k+1}) - J(\theta^k) \leq &- \mu_k + C \int_{0}^{T} \Vert f(t,X^{\theta^k}_t,\theta^{k+1}_t) - f(t,X^{\theta^k}_t,\theta^k_t) \Vert^2 dt \nonumber\\
	&+ C \int_{0}^{T} \Vert \nabla_x H(t,X^{\theta^k}_t, P^{\theta^k}_t, \theta^{k+1}_t) - \nabla_x H(t,X^{\theta^k}_t, P^{\theta^k}_t, \theta^k_t) \Vert^2 dt. \nonumber
	\end{align}
	From the Hamiltonian maximization step in Algorithm~\ref{alg:extended_msa}, we know that
	\begin{align}
	H(t,X^{\theta^k}_t, P^{\theta^k}_t, \theta^{k}_t) \leq &H(t,X^{\theta^k}_t, P^{\theta^k}_t, \theta^{k+1}_t)  \nonumber\\
	&- \frac{1}{2}\rho \Vert f(t,X^{\theta^k}_t,\theta^{k+1}_t) - f(t,X^{\theta^k}_t,\theta^k_t) \Vert^2 \nonumber\\
	&- \frac{1}{2}\rho \Vert \nabla_x H(t,X^{\theta^k}_t, P^{\theta^k}_t, \theta^{k+1}_t) - \nabla_x H(t,X^{\theta^k}_t, P^{\theta^k}_t, \theta^k_t) \Vert^2. \nonumber
	\end{align}
	Hence, we have
	\begin{align*}
	J(\theta^{k+1}) - J(\theta^k) \leq -(1-\frac{2C}{\rho}) \mu_k.
	\end{align*}
	Pick $\rho>2C$, then we indeed have $J(\theta^{k+1}) - J(\theta^k) \leq -D \mu_k$ with $D=(1-\frac{2C}{\rho})>0$. Moreover, we can rearrange and sum the above expression to get 
	\[
	\sum_{k=0}^{M} \mu_k \leq D^{-1}(J(\theta^0) - J(\theta^{M+1})) \leq D^{-1}(J(\theta^0) - \inf_{\theta\in\mathcal{U}}J(\theta)),
	\]
	and hence $\sum_{k=0}^{\infty} \mu_k<+\infty$, which implies $\mu_k \rightarrow 0$ and the extended MSA converges to a solution of the extended PMP.
\end{proof}

\begin{algorithm}
	\SetAlgoLined
	Initialize: $\theta^0\in \mathcal{U}$. Hyper-parameter: $\rho$ \;
	\For{$k = 0$ \KwTo \#Iterations}{
		Solve $\dot{X}^{\theta^k}_t = f(t,X^{\theta^k}_t,\theta^k_t), \quad X^{\theta^k}_0 = x$\;
		Solve $\dot{P}^{\theta^k}_t = -\nabla_x H(t,X^{\theta^k}_t,P^{\theta^k}_t,\theta^k_t), \quad P^{\theta^k}_T = -\nabla \Phi(X^{\theta^k}_T)$\;
		Set $\theta^{k+1}_t = \argmax_{\theta\in\Theta} \tilde{H}(t,X^{\theta^k}_t,P^{\theta^k}_t,\theta,\dot{X}^{\theta^k}_t,\dot{P}^{\theta^k}_t)$ for each $t\in[0,T]$\;
	}
	\caption{Extended MSA}
	\label{alg:extended_msa}
\end{algorithm} 

\section{Discrete-Time Formulation}
\label{sec:discrete_time}

In the previous section, we discussed the PMP and MSA in the continuous-time setting, where we showed that an appropriately extended version (E-MSA) converges to a solution of an extended PMP. Here, we shall discuss the discretized versions of PMP, MSA and E-MSA, as well as their connections to deep residual networks and back-propagation. 

\subsection{Discrete-Time PMP and Discrete-Time MSA}
Applying Euler-discretization to Equation~\ref{eq:dyn_sys_res_nn}, we get
\[
x_{n+1} = x_n + \delta f_n({x_n,\vartheta_n}), \quad x_0 = x,
\]
for $n=0,\dots,N-1$, with $\delta = T/N$ (step-size), $x_n:=X_{n\delta}$, $\vartheta_n:=\theta_{n\delta}$ and $f_n(\cdot):=f(n\delta,\cdot)$. Then, the discrete-time analogue of the control problem~\eqref{eq:dyn_sys_opt_prob} is 
\begin{align}
&\min_{\{\vartheta_0,\dots,\vartheta_{N-1}\}\in\Theta^{N}} \Phi(x_N) + \delta\sum_{n=0}^{N-1} L(\vartheta_n) , \nonumber\\
&x_{n+1} = x_n + \delta f_n(x_n,\vartheta_n), \quad x_0 = x, \quad 0\leq n\leq N-1. \label{eq:discrete_opt_prob}
\end{align}
Observe that barring the constant $\delta$, this is exactly the supervised learning problem for deep residual networks\footnote{If we pick ReLU activations~\citep{hahnloser2000digital}, then $\delta$ can be absorbed into $\vartheta$}. Therefore, when suitably discretized, one expects that the E-MSA provides a means to train residual neural networks via the solution of the extended PMP.

We now write down formally the discretized form of the PMP. Let us use the shorthand $g_n(x_n,\vartheta_n):=x_n + \delta f_n(x_n, \vartheta_n)$. Define the scaled discrete Hamiltonian
\[
H_n(x,p,\vartheta) = p\cdot g_n(x,\vartheta) - \delta L(\vartheta).
\]
Then, a discrete-time PMP is the following set of conditions:
\begin{align*}
&x^*_{n+1} = g_n(x^*_n,\vartheta^*_n), & &x^*_0=x,\\
&p^*_{n} = \nabla_x H_n(x^*_n,p^*_{n+1},\vartheta_n), & &p^*_N = -\nabla_x \Phi(x^*_N), \\
&H_n(x^*_n,p^*_{n+1},\vartheta^*_{n}) \geq H_n(x^*_n,p^*_{n+1},\vartheta), & &\vartheta\in\Theta,\quad n=0,\dots,N-1.
\end{align*}
The issue of whether the PMP holds for discrete time dynamical systems is a delicate one and there are known counterexamples~\citep{butkovsky1963necessary,jackson1965discrete,nahorski1984discrete}. Nevertheless, they must hold approximately for small time step size and this is the situation we will consider in the current paper. We expect Lemma~\ref{lem:error_estimate}, which implies monotonicity of the E-MSA algorithm, to hold in the discrete-time case under appropriate conditions. We leave a rigorous analysis of these statements to future work. For numerical experiments presented in the next section, we shall almost always work with residual networks that can be regarded as discretizations of continuous networks so that the PMP holds approximately at least~\citep{halkin1966maximum}.  

For completeness, we summarize the discrete-time version of E-MSA in Algorithm~\ref{alg:discrete_emsa}. Note that for residual networks ($g_n=x_n+\delta f_n$), this is equivalent to a forward Euler discretization on the state equation and a backward Euler discretization on the co-state equation in Algorithm~\ref{alg:extended_msa}. As before, the Hamiltonian maximization step is decoupled across layers and can be carried out in parallel. 

\begin{algorithm}
	\SetAlgoLined
	Initialize: Initialize: $\vartheta^0_n \in \Theta_n$, $n=0,\dots,N-1$. Hyper-parameter: $\rho$ \;
	\For{$k = 0$ \KwTo \#Iterations}{
		Set $x^{\theta^k}_0=x$ \;
		\For{$n = 0$ \KwTo $N-1$}{
			$x^{\vartheta^k}_{n+1} = g_n(x^{\vartheta^k}_n,\vartheta^k_n)$ \;
		}
		Set $p^{\vartheta^k}_{N}=-\nabla \Phi(x^{\vartheta^k}_{N})$ \;
		\For{$n = N-1$ \KwTo $0$}{
			$p^{\vartheta^k}_{n} = \nabla_x H_n(x^{\vartheta^k}_n, p^{\vartheta^k}_{n+1}, \vartheta^k_n)$ \;
		}
		\For{$n = 0$ \KwTo $N-1$}{
			Set
			$\vartheta^{k+1}_n = \argmax_{\vartheta\in\Theta_n}
			{H}_n(x^{\vartheta^k}_n,p^{\vartheta^k}_{n+1},\vartheta) 
			- \tfrac{1}{2} \rho \Vert x^{\vartheta^k}_{n+1} - g_n(x^{\vartheta^k}_n,\vartheta) \Vert_2^2
			- \tfrac{1}{2} \rho \Vert p^{\vartheta^k}_{n} - \nabla_x H_n(x^{\vartheta^k}_n,p^{\vartheta^k}_{n+1},\vartheta) \Vert_2^2
			$\;
		}
	}
	\caption{Discrete-time E-MSA}\label{alg:discrete_emsa}
\end{algorithm} 

\subsection{Relationship to Gradient Descent with Back-propagation}
\label{sec:msa_and_bp}
We note an interesting relationship of the MSA with classical gradient descent with back-propagation~\citep{kelley1960gradient,bryson1975applied,lecun1998gradient}. We have shown in Lemma~\ref{lem:error_estimate} that the divergence of MSA can be attributed to the large errors in the Hamiltonian dynamics terms caused by the maximization step, which involve drastic changes in parameter values. Assuming each $\Theta_n$ is a continuum and $g_n$, $L$ are differentiable in $\vartheta_n$, a simple fix is to make the maximization step ``soft'': we replace step 12 in Algorithm~\ref{alg:discrete_emsa} with a gradient ascent step:
\begin{equation}
\vartheta^{k+1}_n = \vartheta^k_n + \eta \nabla_\vartheta H_n(x^{\vartheta^k}_n,p^{\vartheta^k}_{n+1},\vartheta^k_n),
\label{eq:H_grad_ascent}
\end{equation}
for some small learning rate $\eta$. We now show that in the discrete-time setting, this is equivalent to the classical gradient descent with back-propagation. 
\begin{proposition}
	\label{prop:pmp_vs_bp}
	The basic MSA in discrete-time (Algorithm~\ref{alg:discrete_emsa} with $\rho=0$) with step 12 replaced by~\eqref{eq:H_grad_ascent} is equivalent to gradient descent with back-propagation.
\end{proposition}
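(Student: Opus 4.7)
The plan is to unpack both updates---the soft-max MSA step on one side, and vanilla gradient descent with back-propagation on the other---and show that, after identifying the MSA co-state $p^{\vartheta^k}_n$ with the back-propagated adjoint, they reduce to the same formula $\vartheta^{k+1}_n = \vartheta^k_n - \eta\,\nabla_{\vartheta_n} J(\vartheta^k)$.

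First, I would differentiate $J(\vartheta) = \Phi(x_N) + \delta\sum_m L(\vartheta_m)$ with respect to $\vartheta_n$, observing that $\vartheta_n$ enters only through the forward recursion $x_{m+1} = g_m(x_m,\vartheta_m)$ (thus influencing $x_{n+1},\ldots,x_N$) and through the explicit running-cost term $\delta L(\vartheta_n)$. A chain-rule unfolding introduces the standard back-propagation adjoint $\lambda_m := (\partial x_N/\partial x_m)^T \nabla\Phi(x_N)$, which satisfies the terminal condition $\lambda_N = \nabla\Phi(x_N)$ and the backward recursion $\lambda_m = \nabla_x g_m(x_m,\vartheta_m)^T \lambda_{m+1}$, and yields $\nabla_{\vartheta_n} J(\vartheta) = \nabla_\vartheta g_n(x_n,\vartheta_n)^T \lambda_{n+1} + \delta\,\nabla_\vartheta L(\vartheta_n)$.

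Second, I would match this recursion against the co-state recursion in Algorithm~\ref{alg:discrete_emsa}. Since $L$ is independent of $x$, we have $\nabla_x H_n(x,p,\vartheta) = \nabla_x g_n(x,\vartheta)^T p$, so $p^{\vartheta^k}_n$ obeys the same backward recursion as $\lambda_m$ but with the opposite terminal value $p^{\vartheta^k}_N = -\nabla\Phi(x^{\vartheta^k}_N)$. A one-line induction then gives $p^{\vartheta^k}_m = -\lambda_m$ for every $m$.

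Third, I would combine the two. Differentiating $H_n(x,p,\vartheta) = p\cdot g_n(x,\vartheta) - \delta L(\vartheta)$ with respect to $\vartheta$ and substituting $p = p^{\vartheta^k}_{n+1} = -\lambda_{n+1}$ produces the identity $\nabla_\vartheta H_n(x^{\vartheta^k}_n,p^{\vartheta^k}_{n+1},\vartheta^k_n) = -\nabla_{\vartheta_n} J(\vartheta^k)$. Plugging this into the soft update~\eqref{eq:H_grad_ascent} immediately gives $\vartheta^{k+1}_n = \vartheta^k_n - \eta\,\nabla_{\vartheta_n} J(\vartheta^k)$, which is precisely one gradient descent step on $J$ with learning rate $\eta$ and gradient supplied by back-propagation. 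The only mildly delicate part is sign-tracking: the paper uses $p^*_N = -\nabla\Phi(X^*_N)$ and \emph{maximizes} the Hamiltonian, both opposite to the conventions of many back-propagation references, so the minus sign must be carried consistently through the co-state recursion, the definition of $H_n$, and the ascent-direction update so that the final cancellation into a descent step comes out cleanly. Beyond this accounting, the argument is routine chain-rule calculus.
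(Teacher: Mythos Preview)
Your proposal is correct and follows essentially the same approach as the paper: both compute $\nabla_{\vartheta_n} J$ via the chain rule, identify the co-state $p_n$ with $-\nabla_{x_n}\Phi(x_N)$ (you do this via the auxiliary adjoint $\lambda_m$ and the identification $p_m=-\lambda_m$, the paper states it directly), and conclude $\nabla_\vartheta H_n = -\nabla_{\vartheta_n} J$. Your explicit sign-tracking through $\lambda_m$ is a slightly more verbose but equivalent route to the paper's terse argument.
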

\begin{proof}
	Recall that the Hamiltonian is
	\[
	H_n := p_{n+1} \cdot g_n(x_n,\vartheta_n) - \delta L(\vartheta_n), 
	\]
	and the total loss function is $J(\vartheta) = \Phi(x_N) + \delta \sum_{n=0}^{N-1} L(\vartheta_n)$. It is easy to see that $p_n = -\nabla_{x_n} \Phi(x_N)$ by working backwards from $n=N$ and the fact that $\nabla_{x_n} x_{n+1} = \nabla_x g_n(x_n,\vartheta_n)$. Then, 
	\begin{align*}
	\nabla_{\vartheta_n} J(\vartheta) =& \nabla_{x_{n+1}} \Phi(x_N) \cdot \nabla_{\vartheta_n} x_{n+1} + \delta \nabla_{\vartheta_n} L(\vartheta_n)\\
	=& - p_{n+1} \cdot \nabla_{\vartheta_n} g_n(x_n,\vartheta_n) + \delta \nabla_{\vartheta_n} L(\vartheta_n)\\
	=& - \nabla_{\vartheta_n} H_n.
	\end{align*}
	Hence,~\eqref{eq:H_grad_ascent} is simply the gradient descent step
	\[
	\vartheta^{k+1}_n = \vartheta^k_n - \eta \nabla_{\vartheta_n} J(\vartheta^k).
	\]
\end{proof}
As the proposition shows, gradient descent with back-propagation can be seen as a modification of the basic MSA by replacing the Hamiltonian maximization step with a gradient ascent step. However, we note that the PMP (and MSA convergence) holds, at least in continuous-time, even when differentiability with respect to $\vartheta$ is not satisfied, and hence is more general than the classical back-propagation. In fact, the PMP formalism shows that the back-propagation of information through a deep network is handled by the co-state equation and there is no requirement or relationship to the gradients with respect to the trainable parameters. In other words, optimization is performed at each layer separately (with or without gradient information), and propagation is independent of optimization. 

\subsection{A Remark on Mini-batch Algorithms}
\label{sec:remark_minibatch}
So far, our discussion has focused on full-batch algorithms, where the input $x$ represents the full set of training inputs.
As modern supervised learning tasks typically involve a large number of training samples, usually the optimization problem has to be solved in mini-batches, where at each iteration we sub-sample $m$ input-label pairs and optimize the parameters $\theta$ (or $\vartheta$ in discrete time) based on losses evaluated on these pairs. 
In the context of continuous-time PMP, we can write the batch version of the three necessary conditions as
\begin{align*}
\dot{X}^{i,*}_t &= \nabla_p H(t,X^{i,*}_t,P^{i,*}_t,\theta^*_t), & X^{i,*}_0 &= x^i,\\
\dot{P}^{i,*}_t &= -\nabla_x H(t,X^{i,*}_t,P^{i,*}_t,\theta^*_t), & P^{i,*}_T &= -\nabla\Phi^i(X^{i,*}_T),\\
\theta^*_t &= \argmax_{\theta\in\Theta} \sum_{i=1}^{M}H(t,X^{i,*}_t, P^{i,*}_t, \theta), & t&\in[0,T],
\end{align*}
for samples $i=1,\dots,M$. We omit for brevity the equivalent expressions for discrete-time. In particular, notice that the propagation steps are decoupled across samples, and hence can be carried out independently. The only difference is the maximization step, where in a mini-batch setting we would evaluate instead
\[
\argmax_{\theta\in\Theta} \sum_{i=1}^{m}H(t,X^{i,*}_t, P^{i,*}_t, \theta).
\]
If $m$ is large enough and the samples are independently and identically drawn, then uniform law of large numbers~\citep{jennrich1969asymptotic} holds under fairly general conditions and ensures that the mini-batch mean of Hamiltonians converges uniformly in $\theta$ to the full-batch sum. Hence, maximization performed on the mini-batch sum should be close to the actual maximization on the full Hamiltonian. Rigorous error estimates for the mini-batch version of our algorithm is out of the scope of the current work, and we use instead numerical results in Section~\ref{sec:numerical} to demonstrate that the algorithm can also be carried out in a mini-batch fashion.  

\section{Numerical Experiments}
\label{sec:numerical}
In this section, we investigate the performance of E-MSA compared with the usual gradient-based approaches, namely stochastic gradient descent and its variants: Adagrad~\citep{duchi2011adaptive} and Adam~\citep{kingma2014adam}. To illustrate key properties of E-MSA, we shall begin by investigating some synthetic examples. First, we consider a simple one-dimensional function approximation problem where we want to approximate $F(x)=\sin(x)$ for $x\in[-\pi,\pi]$ using a continuous time dynamical system. Let $T=5$ and consider
\[
\dot{X}_t = f(X_t, \theta_t) = \tanh(W_t X_t + b_t),
\]
where $\theta_t=(W_t,b_t)\in \R^{5\times 5}\times \R^5$, i.e.~a continuous analogue of a fully connected feed forward neural networks with 5 nodes per layer. To match dimensions, we shall concatenate the input $x$ to form a five dimensional vector of identical components, which is now the initial condition to the dynamical system on $\R^d$. The output of the network is $\sum_{i=1}^{5} X_T^{i}$. and we define the loss function due to one sample to be $\Phi(X_T)=(\sum_{i=1}^{5} X_T^i - \sin(x))^2$. For multiple samples, we average the loss function over all samples in the usual way. We apply E-MSA with discretization size $\delta=0.25$ (giving 20 layers) and compute the Hamiltonian maximization step using 10 iterations of limited memory BFGS method (L-BFGS)~\citep{liu1989limited}. In Figure~\ref{fig:sine_example}(a), we compare the results with gradient descent based optimization approaches, where we observe that E-MSA has favorable convergence rate per-iteration. More interestingly, it is well-known that gradient descent may suffer slow convergence at flat regions or near saddle-points, where the gradients become very small and optimization may stall for a long time. This often occurs as a result of poor initialization of weights and biases~\citep{sutskever2013importance}. Here, we simulate this scenario by initializing all weights and biases ($W_t,b_t$) to be $0$ and observe the optimization process. We see from Figure~\ref{fig:sine_example}(b) that gradient descent based methods are more easily stalled at flat regions. 
We calculated numerically the eigenvalues of the Hessian at this region, which confirms that this is indeed very close to a saddle point.
On the other hand, the Hamiltonian maximization in E-MSA can quickly escape the locally flat regions. One possible reason is that second-order information employed by L-BFGS can off-set the small gradients and provide larger updates.  
\begin{figure}[H]
	\begin{center}
		\subfloat[]{\includegraphics[width=12cm]{./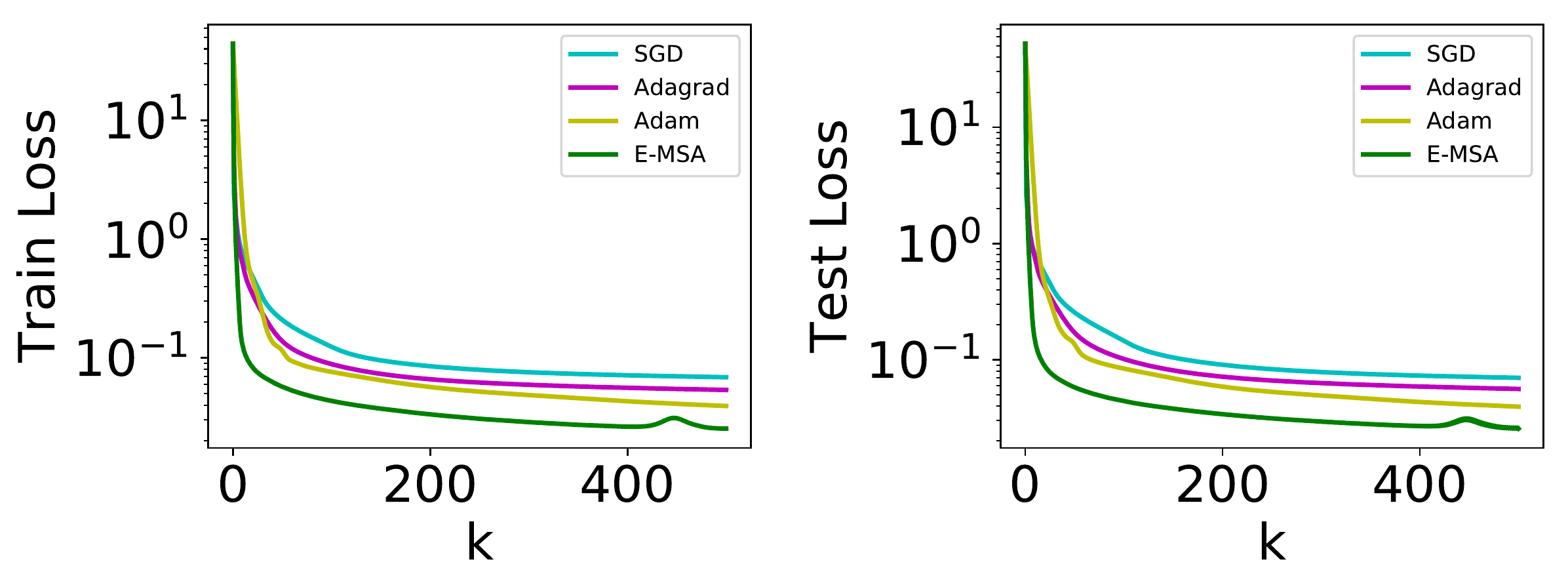}}
		
		\subfloat[]{\includegraphics[width=12cm]{./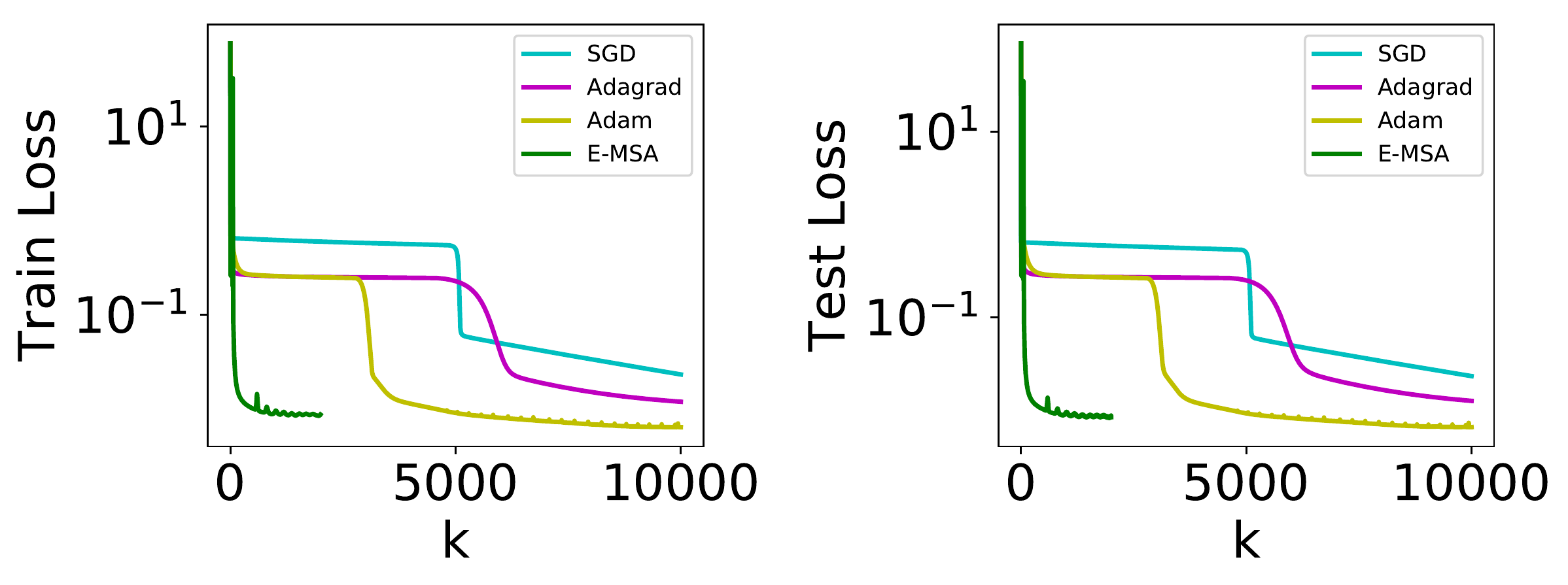}}
	\end{center}
	\caption{Comparison of E-MSA with gradient-based methods for approximating the sine function with a continuous, 5-dimensional dynamical system. A training and test set of 1000 samples each are used.
	(a) Loss function vs iterations for a good initialization, where weights are initialized with truncated random normal variables with standard deviation 0.1 and biases are initialized as constants equal to 0.1. We see that E-MSA has good convergence rate per iteration. 
	(b) We use a poor initialization by setting all weights and biases to 0. We observe that gradient descent based methods tend to become stuck whereas E-MSA are better at escaping these slow manifolds, provided that $\rho$ is well chosen (=1.0 in this case).}
	\label{fig:sine_example}
\end{figure}

Next, we consider a familiar supervised learning test problem: the MNIST data set~\citep{lecun1998mnist} for handwritten digit recognition, with 55000 training samples and 10000 test samples. We employ a continuous dynamical system that resembles a (residual) convolution neural network~\citep{lecun1995convolutional} when discretized. More concretely, at each $t$ we consider the map $f(t,x,\theta) = \tanh(W\star x + b)$ where $W$ is a $3\times 3$ convolution filter with $32$ input and output channels. To match dimensions, we introduce two projection layers at the input (consisting of convolution, point-wise non-linearities followed by $2\times2$ max-pooling). We also use a fully-connected classification layer as the final layer, with softmax cross-entropy loss. Note that the input projection layers and fully-connected output layers are not of residual form, but we can nevertheless apply Algorithm~\ref{alg:discrete_emsa} with the appropriate $g$. We use a total of 10 layers (2 projections, 1 fully-connected and 7 residual layers with $\delta=0.5$, i.e $T=3.5$). The model is trained with mini-batch sizes of 100 using E-MSA and gradient-descent based methods, namely SGD, Adagrad, and Adam. For E-MSA, we approximately solve the Hamiltonian maximization step using either 10 iterations of L-BFGS. Note that since we have decoupled the layers through the PMP, the L-BFGS step used to maximize $H$ is tractable since it involves much fewer parameters than directly minimizing $J$. Figure~\ref{fig:mnist_cnn} compares the performance of E-MSA with the other gradient-descent based methods, where we observe that E-MSA has good performance per-iteration, especially at early stages of training. However, we also show in Figure~\ref{fig:mnist_cnn_wc} that the wall-clock performance of our methods are not currently competitive, because the Hamiltonian maximization step is time consuming and the performance gains per iteration is outweighed by the running time. Note that wall-clock times are compared on the CPU for fairness since we did not use a GPU implementation of L-BFGS. As a further test, we train the same model on a different data set, the fashion MNIST data set~\citep{xiao2017fashion}, where we again observe similar phenomena (see Figure~\ref{fig:fashion_cnn}). Experiments on more complex data sets such as ImageNet~\citep{imagenet_cvpr09} with larger residual networks is a direction of future work. In particular, this may require further improvements to the Hamiltonian maximization step current handled by direct minimization with L-BFGS, which can be significantly slower (on a wall-clock basis) for larger networks and data sets. 
\begin{figure}[hbt]
	\begin{center}
		\subfloat[]{\includegraphics[width=12cm]{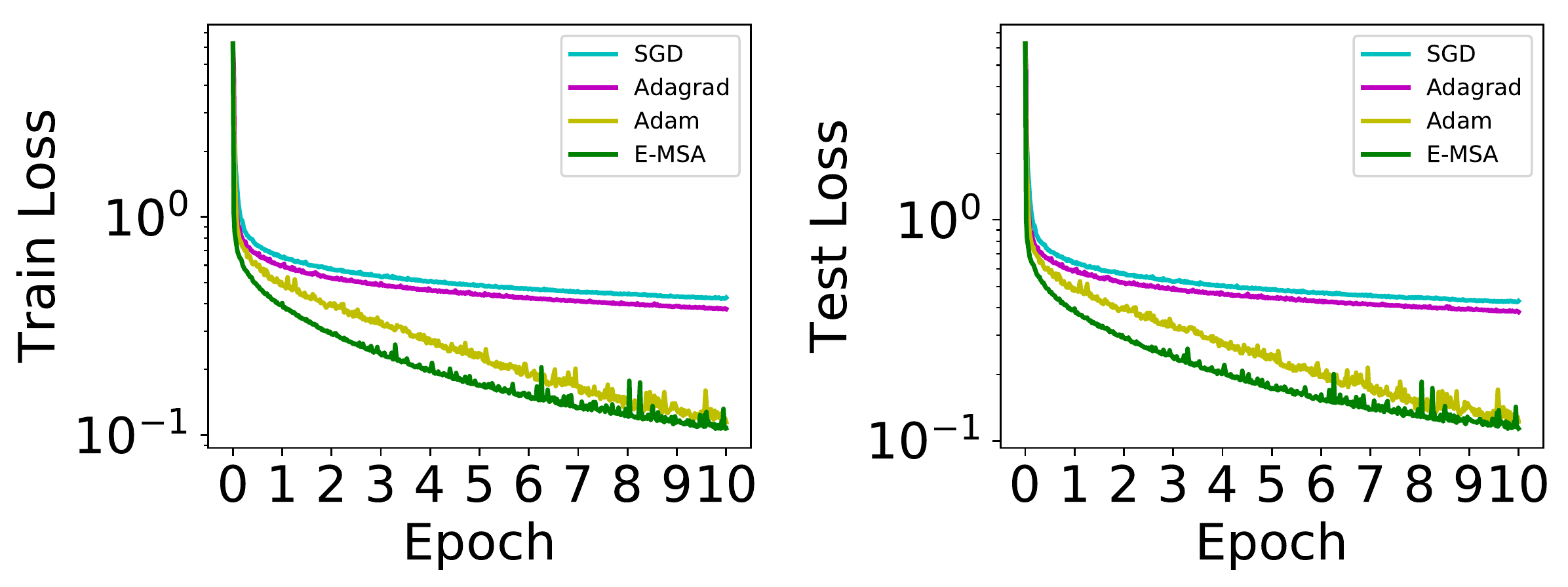}}
		
		\subfloat[]{\includegraphics[width=12cm]{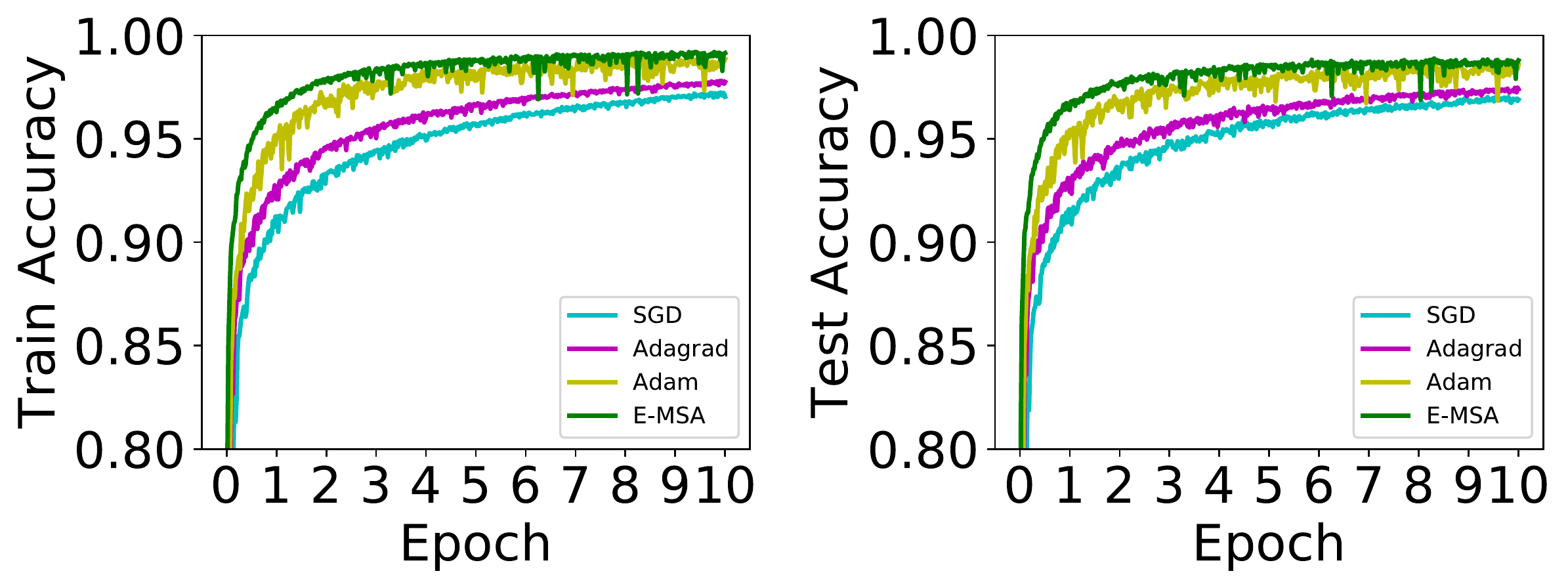}}
	\end{center}
	\caption{Comparison of E-MSA with gradient-based methods for the residual CNN on the MNIST data set. Mini-batch size of 100 is used so that each epoch of training consists of 550 iterations. (a) Train and test Loss vs epoch. (b) Train and test accuracy vs epoch. For each case, we tuned the associated hyper-parameters on a coarse grid for optimal performance. We observe that per-iteration, E-MSA performs favorably, at least at early times. This shows that if the augmented Hamiltonian can be efficiently maximized, we may obtain good performance. }
	\label{fig:mnist_cnn}
\end{figure}
\begin{figure}[hbt]
	\begin{center}
		\includegraphics[width=12cm]{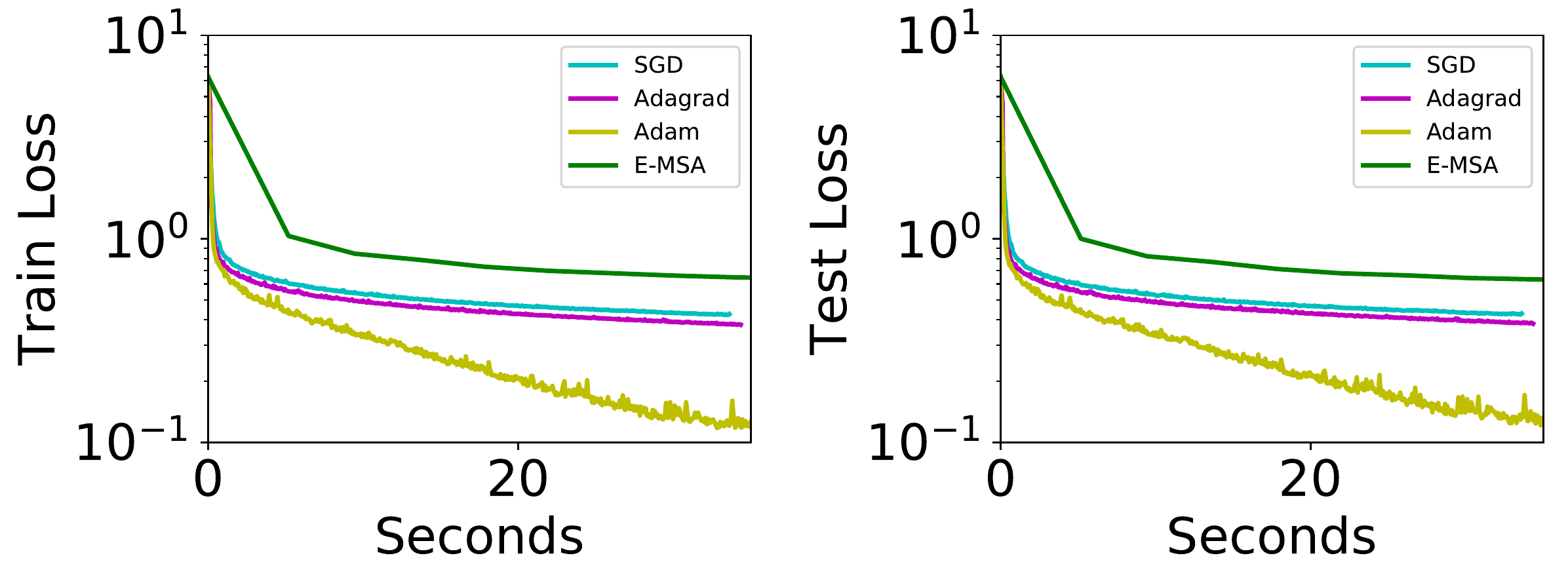}
	\end{center}
	\caption{Comparison of E-MSA with gradient-based methods for the residual CNN on the MNIST data set on a wall-clock basis. We observe that currently, the gains per iteration is outweighed by the additional computational costs. Note that we did not use a GPU implementation for the L-BFGS algorithm used to maximize the augmented Hamiltonian, hence the wall-clock time for E-MSA is expected to be improved. Nevertheless, we expect that more efficient Hamiltonian maximization algorithms must be developed for E-MSA to out-perform gradient-based methods in terms of wall-clock efficiency. }
	\label{fig:mnist_cnn_wc}
\end{figure}
\begin{figure}[hbt]
	\begin{center}
		\subfloat[]{\includegraphics[width=12cm]{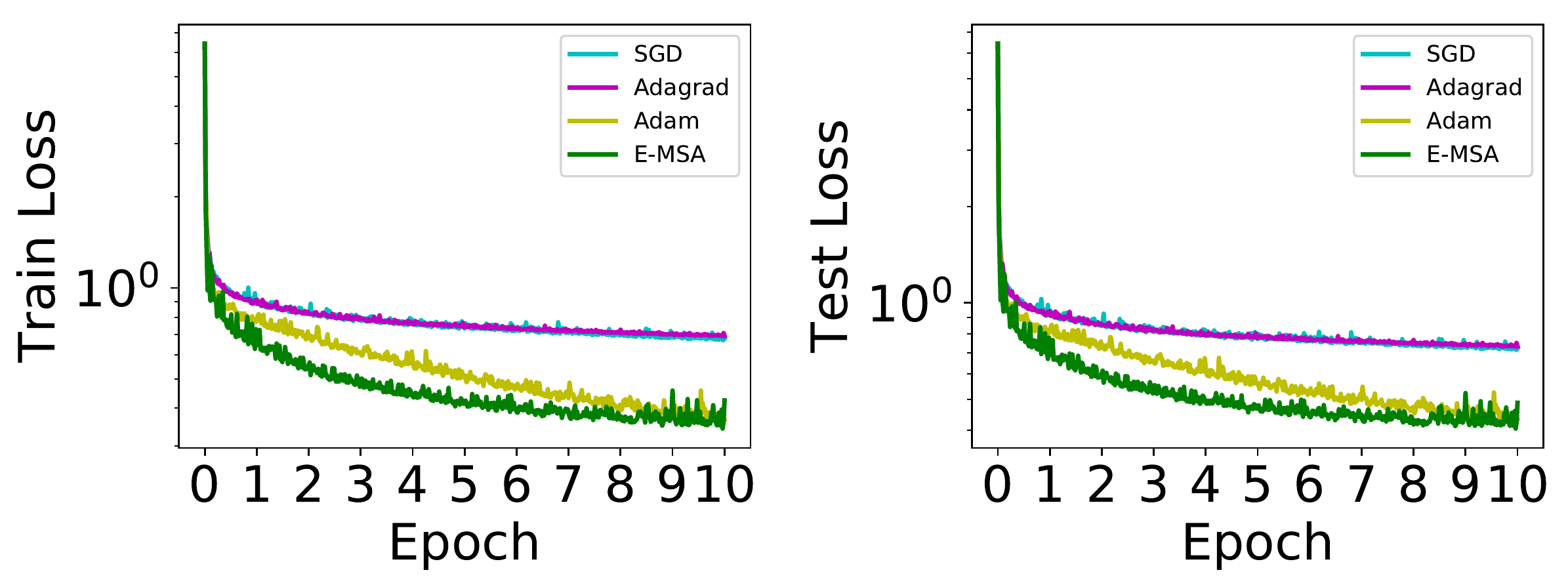}}
		
		\subfloat[]{\includegraphics[width=12cm]{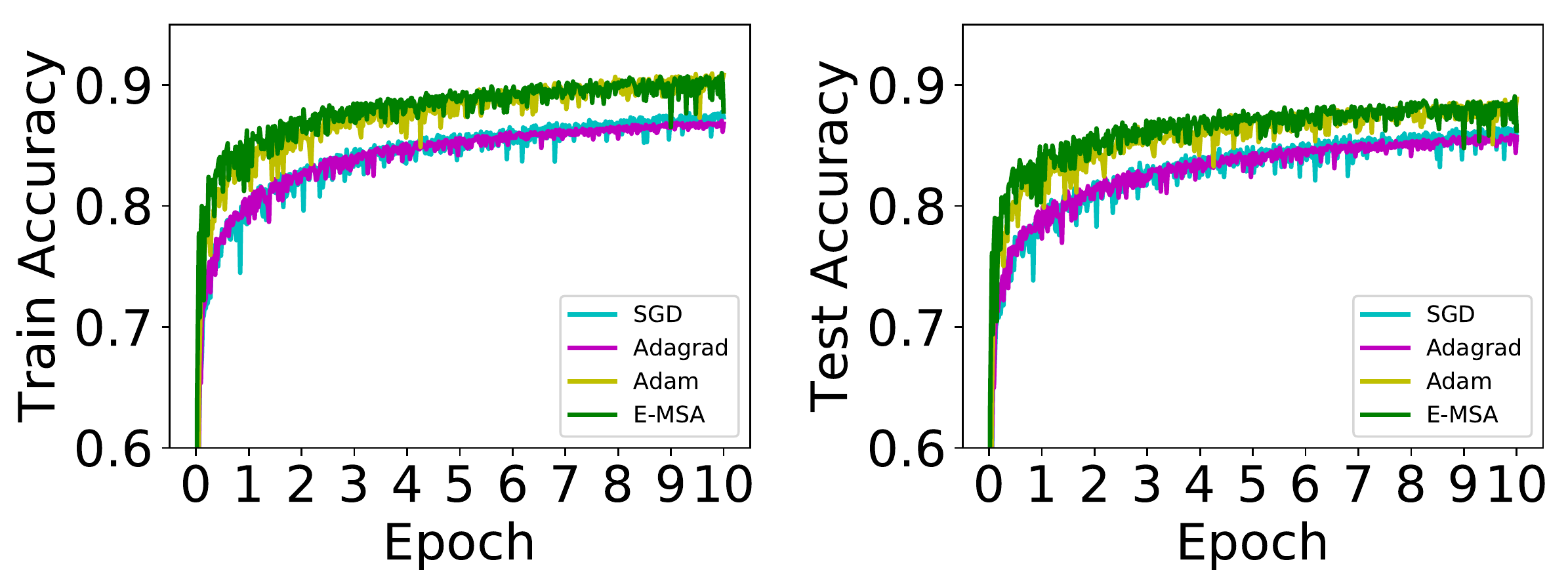}}
	\end{center}
	\caption{Comparison of E-MSA with gradient-based methods for the residual CNN on the fashion MNIST data set. We use the same network structure and mini-batch sizes as in Figure~\ref{fig:mnist_cnn}. The hyper-parameters have to be slightly re-tuned. (a) Train and test Loss vs epoch. (b) Train and test accuracy vs epoch. Again, we observe E-MSA performs favorably per-iteration. }
	\label{fig:fashion_cnn}
\end{figure}

\section{Discussion and Related Work}
\label{sec:discussion} 
We commence this section by highlighting the distinguishing features of E-MSA from traditional gradient-descent based training methods. First, the formulations of PMP and E-MSA do not involve gradient information with respect to the trainable parameters. In fact, Theorem~\ref{thm:pmp} and Algorithm~\ref{alg:extended_msa} remain valid even when the trainable parameters can only take values in a discrete set. Second, due to a more drastic argmax step taken at each iteration, E-MSA tends to have better convergence rates at the early steps of training, as observed in our numerical experiments (Section~\ref{sec:numerical}). Third, in the PMP formalism, the Hamiltonian equations for the state and co-state are the ``forward and backward propagations'', whereas given the state and co-state values, the optimization step is decoupled across layers. This allows one to potentially parallelize the often time-consuming optimization step. Moreover, from Lemma~\ref{lem:error_estimate}, we show that as long as the Hamiltonian is sufficiently increased in a layer without causing too much loss in the Hamiltonian dynamics feasibility conditions, we can ensure decrement of the loss function. This is the reason why we can use a small number of iterations of L-BFGS at each step. Moreover, this suggests that the argmax updates need not happen synchronously, i.e.~the optimization in each layer can be a separate thread or process that computes the argmax and updates that layer's parameters independent of other layers. The propagation may also potentially be allowed to happen asynchronously as long as updates are sufficiently frequent. We leave a rigorous analysis of an asynchronous version of the current approach to future work. In summary, the main strength of the PMP (over e.g.~solving the KKT conditions using gradient methods) is that PMP says that at the optimum, the Hamiltonian is not only stationary (KKT), but globally maximized. This hints that heuristic global optimization methods can be applied to $H$ to obtain algorithms that are very different in behavior compared with gradient-descent based approaches. Again, Lemma~\ref{lem:error_estimate} ensures that such heuristic global maximization need only be approximate. 

As it currently stands, our experiments in Section~\ref{sec:numerical} demonstrate that the Hamiltonian maximization step in E-MSA gives very different behavior compared with gradient-descent based methods. When the Hamiltonian is sufficiently maximized, we indeed obtain favorable performance compared with gradient descent based methods. Furthermore, we saw in Figure~\ref{fig:sine_example} that Hamiltonian maximization may avoid pitfalls such as a very flat landscape. Overall, the key to whether E-MSA (and other methods based on solving the PMP) will eventually constitute a replacement for gradient-descent based algorithm lies in the question of whether efficient Hamiltonian maximization can be performed at reasonable computational costs. Although this is still a non-convex optimization problem, it is much simpler than the original training problem because: (1) Optimization in the layers are decoupled and hence parameter space is greatly reduced; (2) The Hamiltonian is formally similar across different layers, loss functions and models, so specialized algorithms may be designed; (3) The Hamiltonian does not need to be maximized exactly, thus fast heuristic methods~\citep{lee2008modern} or learning~\citep{andrychowicz2016learning,jaderberg2016decoupled,czarnecki2017understanding} can potentially be used to perform this. All these are worthy of future exploration in order to make E-MSA truly competitive. 

Next, we put our work in perspective by discussing related work in the optimal control, optimization and deep learning literature. First, the work on numerical algorithms for the solution of optimal control problem is abundant (see e.g.~\citealt{rao2009survey} for a survey). Many of the state-of-the-art techniques in the control theory literature assume a moderately small problem size, so that conventional non-linear programming techniques~\citep{bertsekas1999nonlinear,bazaraa2013nonlinear} as well as shooting~\citep{roberts1972two} and collocation methods~\citep{betts1998survey} produce efficient algorithms. This is usually not the case for large-scale machine learning problems, where often, the only scalable approach is to rely on iterative updates to the parameters. This is the reason for our focus on the MSA algorithms~\citep{chernousko1982method}, as they are straight-forward to implement and typically have linear scaling in computational complexity with respect to the input and parameter sizes. The basic MSA is discussed in~\cite{krylov1962msa}, and a number of improved variants are discussed in~\cite{chernousko1982method} and references	therein. For example, a popular improvement is based on needle-perturbations, where controls are varied on small intervals at each iteration. While convergent, the main issue with the needle-perturbation approach is the requirement of a sufficiently fine mesh (i.e.~many layers in the discretized network), which impacts computational speed. A possible solution is the use of adaptive meshes, which is a future direction we plan to investigate. Our variant of the MSA presented in this work differs from classical approaches~\citep{chernousko1982method} mainly in the sense that we solve a weaker sufficient condition (extended PMP, Proposition~\ref{prop:epmp}), which then allows us to control errors in the Hamiltonian dynamical equations at every iteration without going into finer mesh-sizes. The regularization terms proportional to $\rho$ is similar to the heuristic modifications suggested in~\cite{lyubushin1982modifications} by regularizing the distance between $\theta^{k}$ and $\theta^{k+1}$, but we do not have to assume convexity of $\Theta$ or that $f$ is Lipschitz in $\theta$. 

In the optimization literature, our work shares some similarity with the recently proposed ADMM methods~\citep{taylor2016training} for training deep neural networks, where the authors also considered necessary conditions with Lagrange multipliers that can decouple optimization across layers. The main difference in our work is that the PMP gives a stronger necessary condition (Hamiltonian maximization) that also applies to general parameter spaces (e.g., discrete, or bounded with non-linear constraints). Our modification of the basic MSA in terms of the augmented Hamiltonian is inspired by the method of augmented Lagrangians often applied in constrained optimization~\citep{hestenes1969multiplier}. The idea of viewing an initially discrete system as the discretization of a continuous-time system has been explored in~\cite{li2017stochastic} in the form of stochastic optimization. Our current work is also in this flavor, but for neural network models.

In deep learning, there are a few works that share our perspective of deep neural networks as a discretization of a dynamical system. We note that the connection between the PMP and back-propagation has been pointed out qualitatively in~\cite{le1988theoretical} and in the development of back-propagation~\citep{bryson1975applied,baydin2015automatic}, although to the best of our knowledge, this work is the first attempt to translate numerical algorithms for the PMP into training algorithms for deep learning that goes beyond gradient descent. The treatment of machine learning as function approximation via a dynamical system has been presented in~\cite{e2017proposal}. The recent work of~\cite{haber2017stable,chang2017reversible} also propose the dynamical systems viewpoint, and the authors used continuous-time tools to address stability issues. In contrast, our work focuses on the optimization aspects centered around the PMP. We also mention other recent approaches to decouple optimization in deep neural networks, such as synthetic gradients~\citep{jaderberg2016decoupled,czarnecki2017understanding} and proximal back-propagation~\citep{frerix2017proximal}. 

\section{Conclusion and Outlook}
\label{sec:conclusion}
In this paper, we discuss the viewpoint that deep residual neural networks can be viewed as discretization of a continuous-time dynamical system, and hence supervised deep learning can be regarded as solving an optimal control problem in continuous time. We explore a concrete consequence of this connection, by modifying the classical method of successive approximations for solving optimal control problems (in particular the PMP) into a method for solving a weaker sufficient condition (extended PMP). We prove the convergence of the resulting algorithm (E-MSA) and test it on various benchmark problems, where we observe that the E-MSA algorithm performs favorably on a per-iteration basis, especially at early stages of training, compared with gradient-based approaches such as SGD, Adagrad and Adam. 

There are many avenues of future research. On the algorithmic side, it is necessary to further improve the computational efficiency of the E-MSA, in particular the Hamiltonian maximization step. Moreover, adaptive selection of $\rho$ depending on iteration number and/or layer can be explored, e.g.~by designing adaptive tuning schemes using control theoretic tools~\citep{li2017stochastic}. Also, it is desirable to formulate and analyze the PMP and E-MSA from a discrete-time perspective in order to broaden the method's application. From a modeling perspective, viewing deep neural networks as continuous-time dynamical systems is useful in the sense that it allows one to think of neural network architectures as dynamical objects. Indeed, at each training iteration of the E-MSA, we do not have to use the same discretization scheme to compute the Hamiltonian dynamical equations. Also, as the PMP and E-MSA assume little structure on the parameter space $\Theta$, it will also be interesting to apply the E-MSA to train neural networks that have discrete weights (e.g.~those that can only take on binary values). Such networks have the advantage of fast inference speed and small memory requirement. However, training such networks is a challenge and most existing techniques rely on approximating or thresholding the derivatives~\citep{courbariaux2015binaryconnect,courbariaux2016binarized}. With the PMP and MSA, we may be able to directly train discrete networks in a principled way. 

\acks{The work of W. E is supported in part by Major Program of NNSFC under grant 91130005, ONR grant N00014-13-1-0338, DOE grants DE-SC0008626 and DE-SC0009248 Q. Li is supported by the Agency for Science, Technology and Research, Singapore. }

\appendix

\section{Function Space Formulation}
\label{sec:Function space formulation}
In this section, we give an alternative, non-rigorous formulation of the supervised learning problem as an optimal control problem on function spaces. This provides an alternative formulation of (continuous-time) deep learning that does not make reference to a specific set of input-outputs, but rather their conditional distributions. The idea is to consider the control of a continuity equation that describes the evolution of probability densities. Hereafter, we proceed formally by assuming all differentiability and integrability conditions are satisfied. 

We would like to approximate, using a dynamical systems approach, some target joint probability density $\rho(x,y)$, where $x\in\mathcal{X}\subset\R^d$ is a sample input and $y\in\mathcal{Y}$ is the corresponding label. In the case where the labels are deterministically determined by the samples, i.e.~there exists $F:\mathcal{X}\rightarrow\mathcal{Y}$ such that $y=F(x)$, we would have $\rho(x,y)=\overline{\rho}(x)\delta(F(x)-y)$. Here, $\overline{\rho}(x)$ is the marginal density of $\rho(x,y)$. In general, we can write $\rho(x,y)=\rho(y|x)\overline{\rho}(x)$. 

As before, the idea is to consider passing the inputs through a dynamical system
\begin{equation}
	\dot{X}_t = f(t,X_t,\theta_t), \qquad X_0=x.
	\label{eq:push_forward}
\end{equation}
We begin with a guess of a conditional density $\rho_0(y|x)$ of $y$ given $x$. In the deterministic case, we may set $\rho_0(y|x) = \delta({y-F_0(x)})$ for some $F_0:\mathcal{X}\rightarrow\mathcal{Y}$ (this is like the last layer of the neural network, be it a regressor or a classifier). Note that $F_0$ is potentially very different from $F$, so that $\rho_0(\cdot|x)$ is far from our target $\rho(\cdot|x)$. 

To improve this approximation, we drive the initial condition by the controllable dynamical system~\eqref{eq:push_forward}. That is, we define the approximation at time $t$ of $\rho(y|x)$ to be $\rho_t(y|x):=\langle \rho_0(y|\cdot), u_t\rangle$, with $u_t$ denoting the probability density of $X_t$ at time $t$ (push-forward distribution of $X_t$ according to~\eqref{eq:push_forward}).  
It is well-known that $u_t$ follows the continuity equation, or Liouville equation~\citep{gibbs2014elementary}; or forward Kolmogorov equation in stochastic processes, but with zero noise~\citep{risken1996fokker}, 
\begin{equation}
	\frac{d}{d t}	u_t = -\text{div}(f(t,\cdot,\theta_t)u_t), \qquad
	u_0 = \delta_{x},
	\label{eq:dyn_sys_pde}
\end{equation}
where $\text{div}u = \sum_i \partial u/\partial x_i$ is the divergence operator and $\delta_x(x')=\delta(x-x')$ is a point-mass at $x$. We shall assume that $u_t\in\mathcal{H}\subset L^2(\R^d)$ for some function space $\mathcal{H}$, for all $t\in(0,T]$. 

The goal now is to adjust $\theta\in\mathcal{U}$ so that $\rho_t(\cdot|x)$ is close to $\rho(\cdot|x)$. 
To this end, we define a differentiable loss function $\Phi(\rho_1,\rho_2)$ that measures distances between two conditional densities $\rho_1,\rho_2$ (e.g., $L^2$ loss, K-L divergence). Then, the learning problem can be formulated as the following optimal control problem:
\begin{align}
&\min_{\theta \in \mathcal{U}} \E_{x\sim \overline{\rho}}
\left[\Phi(\rho_t(\cdot|x),\rho(\cdot|x)) + \int_{0}^{T} L(\theta_t)dt\right], \nonumber \\
&\frac{d}{d t}	u_t = -\text{div}(f(t,\cdot,\theta_t)u_t), \qquad
u_0 = \delta_{x}. 
\label{eq:dyn_sys_pde_opt_prob}
\end{align}
As before, $L$ is a regularizer on the trainable parameters. Now,~\eqref{eq:dyn_sys_pde_opt_prob} is an optimal control problem on the function space $\mathcal{H}$. 

We now write down formally a set of necessary conditions for optimality, in the form of the Pontryagin's maximum principle, for the present function-space control problem~\eqref{eq:dyn_sys_pde_opt_prob}. Define the Hamiltonian functional $H:[0,T]\times \mathcal{H} \times \mathcal{H} \times \Theta \rightarrow \R$
\begin{align*}
	H(t,u,v,\theta) &:= - \langle v, \text{div}(f(t,\cdot,\theta) u) \rangle - L(\theta) \\
	&= -\int_{\R^d} v(x) \sum_{i=1}^d \frac{\partial}{\partial x_i} (f(t,x,\theta) u(x)) dx - L(\theta).
\end{align*}
Then, the Pontryagin's maximum principle for this system is expected to take the form: let $\theta^*\in\mathcal{U}$ be an optimal control, then there exists a co-state process $v_t\in \mathcal{H}$ such that
\begin{align*}
	&\frac{d}{dt} u^*_t = D_v H(t,u^*_t,v^*_t,\theta^*_t), & &u^*_0=\delta_x, \\
	&\frac{d}{dt} v^*_t = -D_u H(t,u^*_t,v^*_t,\theta^*_t), & &v^*_T=-D_u\Phi(\langle\rho_0, u^*_T\rangle,\rho(\cdot|x)) \\
	&\E_{x\sim\overline{\rho}} H(t,u^*_t,v^*_t,\theta^*_t) \geq 
	\E_{x\sim\overline{\rho}} H(t,u^*_t,v^*_t,\theta), & &\theta\in\Theta, t\in[0,T],
\end{align*}
where $D$ denotes the usual Fr\'{e}chet derivative. Note that by definition, we have $D_v H = - \text{div}(f u)$ and $D_u H = f\cdot \nabla_x v$.
Observe that the co-state $v^*$ satisfies the (time-reversed) adjoint Liouville's equation with a specified terminal condition. The PMP for similar functional optimal control problems has been studied in, among others, ~\cite{pogodaev2016optimal,roy2017numerical}, albeit without the expectation over initial density. 

In summary, the advantage of this formulation is that we make no explicit reference to the training data or target functions and formulate the entire problem as a control problem on probability densities. Of course, in practice, to implement an MSA-like algorithm, the terminal condition of the co-state will depend on the target joint density, which we can only access through the sampled data. A rigorous analysis of this function space control formulation and its consequences will be explored in future work. 

\section{Proof of Lemma~\ref{lem:error_estimate}}
\label{sec:proof_lemma_1}
First, observe that assumptions (A1)-(A2) in the main text implies that the second derivatives of $f$ and $\Phi$ are bounded by $K$. Provided that $P^\theta_t$ is bounded, they also imply that the second derivatives of $H$ with respect to $x$ and $p$ are bounded when evaluated on $X^\theta_t,P^\theta_t,\theta_t$. We first establish the boundedness of $P^\theta_t$.
\begin{lemma}
	Assume that (A1)-(A2) hold. Then, there exists a constant $K'>0$ such that for any $\theta$, 
	\[
	\Vert P^\theta_t \Vert \leq K',
	\]
	for all $t\in[0,T]$. 
	\label{lem:H_Deriv}
\end{lemma}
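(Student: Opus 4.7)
The plan is to exploit the fact that the co-state equation is linear in $P^\theta_t$, and then apply Grönwall's inequality backwards in time starting from the terminal condition.

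First I would unpack the definition of the Hamiltonian: since $H(t,x,p,\theta) = p\cdot f(t,x,\theta) - L(\theta)$ and $L$ does not depend on $x$, we have
\[
\nabla_x H(t,X^\theta_t, P^\theta_t,\theta_t) = \bigl(\nabla_x f(t, X^\theta_t, \theta_t)\bigr)^{\!T} P^\theta_t.
\]
Thus the co-state equation~\eqref{eq:Ptheta} is of the form $\dot{P}^\theta_t = -A(t) P^\theta_t$ with $A(t) := \bigl(\nabla_x f(t, X^\theta_t, \theta_t)\bigr)^{\!T}$, i.e.\ linear in $P^\theta_t$.

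Next I would produce the two a priori bounds needed for Grönwall. The Lipschitz hypothesis on $f(t,\cdot,\theta)$ in (A2) implies $\Vert \nabla_x f(t,x,\theta)\Vert_2 \leq K$ uniformly in $(t,x,\theta)$, so $\Vert A(t)\Vert_2 \leq K$ for every $t\in[0,T]$, independently of $\theta$. Similarly, the Lipschitz hypothesis on $\Phi$ in (A1) gives $\Vert\nabla\Phi(x)\Vert \leq K$ for all $x$, whence the terminal condition satisfies $\Vert P^\theta_T\Vert = \Vert \nabla\Phi(X^\theta_T)\Vert \leq K$, again uniformly in $\theta$.

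Finally I would apply Grönwall's inequality in reverse time. From $\dot{P}^\theta_t = -A(t) P^\theta_t$ one obtains $\tfrac{d}{dt}\Vert P^\theta_t\Vert^2 \geq -2K\Vert P^\theta_t\Vert^2$, or equivalently, working backwards from $T$, $\Vert P^\theta_t\Vert \leq \Vert P^\theta_T\Vert\, e^{K(T-t)} \leq K e^{KT}$. Setting $K' := K e^{KT}$ yields the claim with a constant independent of $\theta$. I do not expect any serious obstacle here: the step that required the most care is observing that (A2) really does give a \emph{uniform} operator-norm bound on $\nabla_x f$ (since Lipschitz-in-$x$ with constant $K$ is what controls $\Vert\nabla_x f\Vert_2$), so that Grönwall can be applied along the trajectory without first having to bound $X^\theta_t$ itself.
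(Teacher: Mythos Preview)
Your proposal is correct and follows essentially the same route as the paper: both recognize that the co-state equation is linear in $P^\theta_t$ with coefficient matrix $\nabla_x f$ bounded in operator norm by $K$ via (A2), bound the terminal condition by $K$ via (A1), and then apply Gr\"onwall's inequality backwards from $t=T$ to obtain the identical constant $K' = K e^{KT}$. The only cosmetic difference is that the paper introduces the explicit time-reversal substitution $\tau = T - t$, $\tilde{P}^\theta_\tau = P^\theta_{T-\tau}$ before invoking Gr\"onwall, whereas you argue directly via the differential inequality for $\Vert P^\theta_t\Vert^2$.
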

\begin{proof}
	Using~\eqref{eq:Ptheta} and setting $\tau:=T-t$, $\tilde{P}^\theta_\tau := P^\theta_{T-\tau}$ we get
	\[
	\dot{\tilde{P}}^\theta_\tau = \tilde{P}^\theta_\tau \cdot \nabla_x f(t,X^\theta_{T-\tau},T-\tau), \qquad \tilde{P}^\theta_0 = -\nabla \Phi(X^\theta_T).
	\]
	Using (A1)-(A2), we have $\Vert P^\theta_T\Vert=\Vert \nabla_x \Phi(X^\theta_T) \Vert \leq K$ and $\Vert\nabla_x f(t,X^\theta_t,\theta_t)\Vert_2 \leq K$. Hence, 
	\[
	\Vert \dot{\tilde{P}}^\theta_\tau \Vert \leq K \Vert \tilde{P}^\theta_\tau \Vert,
	\]
	and by Gronwall's inequality, 
	\[
	\Vert \tilde{P}^\theta_\tau \Vert \leq K e^{KT} =: K'.
	\]
	This proves the claim since it holds for any $\tau$. 
\end{proof}

We now prove Lemma~\ref{lem:error_estimate}. The approach here is similar to that employed in~\cite{rozonoer1959maximum}. 
\begin{proof}[Proof of Lemma~\ref{lem:error_estimate}]
	From~\eqref{eq:Xtheta} and the definition of the Hamiltonian, we have for any $\theta\in\mathcal{U}$, 
	\[
	I(X^\theta,P^\theta,\theta):=\int_{0}^{T} P^\theta_t \cdot \dot{X}^\theta_t 
	- H(t,X^\theta_t,P^\theta_t,\theta_t)
	- L(\theta_t) dt
	\equiv 0.
	\]
	Denote $\delta X_t = X^\phi_t - X^\theta_t$ and $\delta P_t = P^\phi_t - P^\theta_t$, then we have
	\begin{align}
	0 \equiv& I(X^\phi,P^\phi,\phi) - I(X^\theta, P^\theta, \theta) \nonumber\\
	=& \int_{0}^{T} P^\theta_t \cdot \delta \dot{X}_t + \delta P_t \cdot \dot{X}^\theta_t + \delta P_t \cdot \delta \dot{X}_t  dt \nonumber\\
	&- \int_{0}^{T} H(t,X^\phi_t,P^\phi_t,\phi_t) - H(t,X^\theta_t, P^\theta_t, \theta_t) dt \nonumber \\
	&- \int_{0}^{T} L(\phi_t) - L(\theta_t) dt.
	\label{eq:lem_1_cp1}
	\end{align}
	Now, by integration by parts
	\begin{align}
	\int_{0}^{T} P^\theta_t \cdot \delta \dot{X}_t dt = &P^\theta_t \cdot \delta X_t\Big\vert_0^T 
	- \int_{0}^T \dot{P}^\theta_t \cdot \delta X_t dt, \label{eq:ibp_1}\\
	\int_{0}^{T} \delta P_t \cdot \delta \dot{X}_t dt = &\delta P_t  \cdot \delta X_t\Big\vert_0^T 
	- \int_{0}^T \delta \dot{P}_t  \cdot \delta X_t dt. \label{eq:ibp_2}
	\end{align}
	Using~\eqref{eq:Xtheta},~\eqref{eq:Ptheta} and~\eqref{eq:ibp_1}, we have
	\begin{align}
	&\int_{0}^{T} P^\theta_t  \cdot \delta \dot{X}_t + \delta P_t  \cdot \dot{X}^\theta_t dt \nonumber\\
	= &P^\theta_t  \cdot \delta X_t\Big\vert_0^T
	+ \int_0^T \left(f(t,X^\theta_t;\theta_t) \cdot \delta P + \nabla_x H(t,X^\theta_t, P^\theta_t, \theta_t) \cdot \delta X\right)dt \nonumber\\
	= &P^\theta_t  \cdot \delta X_t\Big\vert_0^T
	+ \int_0^T \left(\nabla_z H(t,Z^\theta_t, \theta_t)  \cdot  \delta Z\right)dt. \label{eq:lem_1_cp2}
	\end{align}
	where in the last line we defined $Z^\theta:=(X^\theta, P^\theta)$. Similarly, from~\eqref{eq:ibp_2} we get
	\begin{align}
	\int_{0}^{T} \delta P_t  \cdot \delta \dot{X}_t dt = & \frac{1}{2}\int_{0}^{T} \delta P_t  \cdot \delta \dot{X}_t dt + \frac{1}{2}\int_{0}^{T} \delta P_t  \cdot \delta \dot{X}_t dt \nonumber\\
	= &\frac{1}{2}\delta P_t  \cdot \delta X_t \Big\vert_0^T \nonumber\\
	&+ \frac{1}{2}\int_0^T \left([\nabla_z H(t,Z^\phi_t, \phi_t) - \nabla_z H(t,Z^\theta_t, \theta_t)] \cdot \delta Z_t\right)dt \nonumber\\
	= &\frac{1}{2}\delta P_t  \cdot \delta X_t \Big\vert_0^T \nonumber\\
	&+ \frac{1}{2}\int_0^T [\nabla_z H(t,Z^\theta_t, \phi_t) - \nabla_z H(t,Z^\theta_t, \theta_t)] \cdot \delta Z_tdt \nonumber\\
	&+ \frac{1}{2}\int_0^T \delta Z_t \cdot  \nabla^2_z H(t,Z^\theta_t + r_1(t) \delta Z_t, \phi_t) \cdot \delta Z_t dt. \label{eq:lem_1_cp3}
	\end{align}	
	where we have used Taylor's theorem in the last step with $r_1(t)\in[0,1]$. We now rewrite the boundary terms. Since $\delta X_0 = 0$, we have
	\begin{align}
	&(P^\theta_t+\frac{1}{2}\delta P_t)  \cdot \delta X_t \Big\vert_0^T = (P^\theta_T+\frac{1}{2}\delta P_T)  \cdot \delta X_T \nonumber\\
	= &-\nabla\Phi(X_T^\theta) \cdot \delta X_T - \frac{1}{2} (\nabla \Phi(X_T^\phi)- \nabla \Phi(X_T^\theta)) \cdot \delta X_T \nonumber\\
	= &-\nabla\Phi(X_T^\theta) \cdot \delta X_T - \frac{1}{2} \delta X_T  \cdot \nabla^2 \Phi(X_T^\theta + r_2\delta X_T) \cdot \delta X_T \nonumber\\
	= & -(\Phi(X^{\phi}_T)-\Phi(X^{\theta}_T)) - \frac{1}{2} \delta X_T  \cdot  (\nabla^2 \Phi(X_T^\theta + r_2\delta X_T)+\nabla^2 \Phi(X_T^\theta + r_3\delta X_T)) \cdot \delta X_T,
	\label{eq:lem_1_cp4}
	\end{align}
	for some $r_2, r_3\in[0,1]$. 
	Lastly, for each $t\in[0,T]$ we have
	\begin{align}
	H(t,Z^\phi_t,\phi_t) - H(t,Z^\theta_t, \theta_t) = &H(t,Z^\theta_t,\phi_t) - H(t,Z^\theta_t, \theta_t)\nonumber\\
	&+\nabla_z H(t,Z^\theta_t,\phi_t) \cdot \delta Z_t \nonumber\\
	&+ \frac{1}{2}\delta Z_t \cdot \nabla^2_z H(t,Z^\theta_t + r_4(t)\delta Z_t, \phi_t)  \cdot \delta Z_t,\label{eq:lem_1_cp5}
	\end{align}
	where $r_4(t)\in[0,1]$. 
	
	Substituting~\eqref{eq:lem_1_cp2},~\eqref{eq:lem_1_cp3},~\eqref{eq:lem_1_cp4},~\eqref{eq:lem_1_cp5} into~\eqref{eq:lem_1_cp1}, we obtain
	\begin{align}
	&\left[\Phi(X^{\phi}_T) + \int_{0}^{T}L(\phi_t)\right] 
	- \left[\Phi(X^{\theta}_T) + \int_{0}^{T}L(\theta_t)\right] \nonumber\\
	=& \frac{1}{2} \delta X_T  \cdot  (\nabla^2 \Phi(X_T^\theta + r_2\delta X_T)+\nabla^2 \Phi(X_T^\theta + r_3\delta X_T)) \cdot \delta X_T \nonumber\\
	& -\int_{0}^{T} \Delta H_{\phi,\theta}(t)dt \nonumber\\
	& +\frac{1}{2}\int_{0}^{T} (\nabla_z H(t,Z^\theta_t,\phi_t)-\nabla_z H(t,Z^\theta_t,\theta_t)) \cdot \delta Z_t dt \nonumber\\
	& +\frac{1}{2}\int_{0}^{T} \left(\delta Z_t \cdot  [\nabla^2_z H(t,Z^\theta_t + r_1(t)\delta Z_t, \phi_t) - \nabla^2_z H(t,Z^\theta_t + r_4(t)\delta Z_t, \phi_t)] \cdot \delta Z_t\right) dt.
	\label{eq:lem_1_cp6}
	\end{align}
	The left hand side is simply $J(\phi)-J(\theta)$, and so it remains to estimate the right hand side terms. First, let us estimate $\delta X$ and $\delta P$. By definition,
	\[
	\delta \dot{X}_t = f(t,X^\phi_t,\phi_t) - f(t,X^\theta_t,\theta_t).
	\]
	Integrating, we get
	\[
	\delta X_t = \int_{0}^{t} f(t,X^\phi_s,\phi_s) - f(t,X^\theta_s,\theta_s) ds,
	\]
	and so
	\begin{align}
	\Vert\delta X_t \Vert \leq &\int_{0}^{t} \Vert f(t,X^\phi_s,\phi_s) - f(t,X^\theta_s,\theta_s)\Vert ds \nonumber\\
	\leq & \int_{0}^{t} \Vert f(t,X^\phi_s,\phi_s) - f(t,X^\theta_s,\phi_s)\Vert ds \nonumber\\
	& +\int_{0}^{t} \Vert f(t,X^\theta_s,\phi_s) - f(t,X^\theta_s,\theta_s)\Vert ds \nonumber\\
	\leq & \int_{0}^{T} \Vert f(t,X^\theta_s,\phi_s) - f(t,X^\theta_s,\theta_s)\Vert ds \nonumber\\
	& + K \int_{0}^{t} \Vert\delta X_s\Vert dt.
	\end{align}
	By Gronwall's inequality, we have
	\begin{align}
	\Vert\delta X_t \Vert \leq e^{KT} \int_{0}^{T} \Vert f(t,X^\theta_s,\phi_s) - f(t,X^\theta_s,\theta_s)\Vert ds.
	\label{eq:lem_1_dx_est}
	\end{align}
	To estimate $\delta P$, we use the same substitution as in Lemma~\ref{lem:H_Deriv} with $\tau=T-t$ and $\tilde{\cdot}_\tau=\cdot_{T-t}$. We get
	\[
	\delta \tilde{P}_\tau = \delta \tilde{P}_0 + \int_{0}^{\tau} \nabla_x H(t,\tilde{X}^\phi_s,\tilde{P}^\phi_s,\tilde{\phi}_s) - \nabla_x H(t,\tilde{X}^\theta_s,\tilde{P}^\theta_s,\tilde{\theta}_s) ds,
	\]
	and hence using Lemma~\ref{lem:H_Deriv} and assumptions (A1)-(A2), 
	\begin{align}
	\Vert\delta \tilde{P}_\tau \Vert \leq &\Vert\delta \tilde{P}_0\Vert + \int_{0}^{\tau} \Vert \nabla_x H(t,\tilde{X}^\phi_s,\tilde{P}^\phi_s,\tilde{\phi}_s) - \nabla_x H(t,\tilde{X}^\theta_s,\tilde{P}^\theta_s,\tilde{\theta}_s)\Vert ds \nonumber\\
	\leq & K \Vert\delta X_T \Vert + K K' \int_{0}^{T} \Vert\delta X_s\Vert ds + K\int_{0}^{\tau} \Vert\delta \tilde{P}_s \Vert ds \nonumber\\
	& + \int_{0}^{T} \Vert \nabla_x H(t,X^\theta_s,P^\theta_s,\phi_s) - \nabla_x H(t,X^\theta_s,P^\theta_s,\theta_s)\Vert ds \nonumber\\
	\leq & e^{KT}K(\Vert\delta X_T \Vert + K' \int_{0}^{T} \Vert\delta X_s\Vert ds) \nonumber\\
	& + e^{KT}\int_{0}^{T} \Vert \nabla_x H(t,X^\theta_s,P^\theta_s,\phi_s) - \nabla_x H(t,X^\theta_s,P^\theta_s,\theta_s)\Vert ds.
	\end{align}
	Using estimate~\eqref{eq:lem_1_dx_est}, we obtain
	\begin{align}
	\Vert\delta P_t \Vert \leq & e^{2KT} K(1+K'T) \int_{0}^{T} \Vert f(t,X^\theta_s,\phi_s) - f(t,X^\theta_s,\theta_s)\Vert ds \nonumber\\
	& + e^{KT} \int_{0}^{T} \Vert \nabla_x H(t,X^\theta_s,P^\theta_s,\phi_s) - \nabla_x H(t,X^\theta_s,P^\theta_s,\theta_s)\Vert ds.
	\label{eq:lem_1_dp_est}
	\end{align}
	Now, we substitute estimates~\eqref{eq:lem_1_dx_est} and~\eqref{eq:lem_1_dp_est} into~\eqref{eq:lem_1_cp6} and rename constants for simplicity. Note that by assumptions (A1)-(A2) and Lemma~\ref{lem:H_Deriv}, all the second derivative terms are bounded element-wise by some constant $K''$. Hence, we have $\vert\delta Z_t \cdot A \cdot \delta Z_t\vert\leq K''\Vert \delta Z \Vert^2$ for each $A$ being a second derivative matrix. Thus we obtain
	\begin{align}
	J(\phi) - J(\theta) \leq &-\int_{0}^{T} \Delta H_{\phi,\theta}(t)dt \nonumber\\
	& + \frac{1}{2}K'' \Vert\delta X_T\Vert^2 \nonumber\\
	& + K'' \int_{0}^{T}( \Vert\delta X_t\Vert^2 + \Vert\delta P_t\Vert^2) dt \nonumber\\
	& + \frac{1}{2} \int_{0}^{T} \Vert \delta X_t \Vert \Vert f(t,X^\theta_t,\phi_t) - f(t,X^\theta_t,\theta_t) \Vert dt \nonumber\\
	& + \frac{1}{2} \int_{0}^{T} \Vert \delta P_t \Vert \Vert \nabla_x H(t,X^\theta_t, P^\theta_t, \phi_t) - \nabla_x H(t,X^\theta_t, P^\theta_t, \theta_t) \Vert dt	\nonumber\\
	\leq &-\int_{0}^{T} \Delta H_{\phi,\theta}(t)dt \nonumber\\
	&+ C \left(\int_{0}^{T} \Vert f(t,X^\theta_t,\phi_t) - f(t,X^\theta_t,\theta_t) \Vert dt\right)^2 \nonumber\\
	&+ C \left(\int_{0}^{T} \Vert \nabla_x H(t,X^\theta_t, P^\theta_t, \phi_t) - \nabla_x H(t,X^\theta_t, P^\theta_t, \theta_t) \Vert^2 dt\right)^2\\
	\leq &-\int_{0}^{T} \Delta H_{\phi,\theta}(t)dt \nonumber\\
	&+ C \int_{0}^{T} \Vert f(t,X^\theta_t,\phi_t) - f(t,X^\theta_t,\theta_t) \Vert^2 dt \nonumber\\
	&+ C \int_{0}^{T} \Vert \nabla_x H(t,X^\theta_t, P^\theta_t, \phi_t) - \nabla_x H(t,X^\theta_t, P^\theta_t, \theta_t) \Vert^2 dt.\nonumber
	\end{align}
\end{proof}

\begin{remark}
	For applications, the global Lipschitz condition (A2) w.r.t. $x$ on $f$ may be restrictive. Note that this can be replaced by a local Lipschitz condition if we can show that $X_t$, $t\in[0,T]$ is bounded for all $\theta\in\mathcal{U}$. This is true if the parameter space $\Theta$ is bounded, which we can safely assume in practice, as long as a suitable regularization is used that prevents the parameters from getting arbitrarily large. Alternatively, a projection step can be used to restrict the parameters to a bounded set. In either cases, this should not negatively affect the performance of the model. 
\end{remark}

\bibliography{ref}

\end{document}